\newtheorem{theorem}{Theorem}[section]
\newcommand{\bbm}{\begin{bmatrix}}
\newcommand{\ebm}{\end{bmatrix}}
\newcommand{\mbf}{\mathbf}
\newcommand{\mbs}[1]{{\boldsymbol{#1}}}
\def\ep{\epsilon}
\newcommand{\beq}{\begin{equation}}
\newcommand{\eeq}{\end{equation}}
\newcommand{\bdis}{\begin{displaymath}}
\newcommand{\edis}{\end{displaymath}}
\newcommand{\beqn}[1]{\begin{subequations}\label{eq:#1}\begin{eqnarray}}
\newcommand{\eeqn}{\end{eqnarray}\end{subequations}}
\newcommand{\est}[1]{\hat{#1}}
\newcommand{\wdg}{\wedge}
\newcommand{\Wdg}{\curlywedge}
\newcommand{\Jbig}{\mbs{\mathcal{J}}}
\acrodef{BA}{Bundle Adjustment}
\acrodef{SLAM}{Simultaneous Localization and Mapping}
\acrodef{RANSAC}{Random Sample And Consensus}
\acrodef{QCQP}{Quadratically Constrained Quadratic Program}
\acrodef{SDP}{Semidefinite Program}
\acrodef{IRLS}{Iteratively Reweighted Least-Squares}
\acrodef{TLS}{Total Least-Squares}
\acrodef{SVD}{Singular-Value Decomposition}
\acrodef{GM}{Geman-McClure}
\acrodef{PSD}{positive-semidefinite}
\acrodef{GP}{Gaussian Process}
\acrodef{WNOA}{White Noise on Acceleration}
\acrodef{WNOJ}{White Noise on Jerk}
\acrodef{LICQ}{Linearly Independent Constraint Qualification}
\acrodef{SVR}{Singular Value Ratio}
\acrodef{LTV}{Linear Time-Varying}
\acrodef{LTI}{Linear Time-Invariant}
\acrodef{SDE}{Stochastic Differential Equation}
\acrodef{STEAM}{Simultaneous Trajectory Estimation And Mapping}
\title{Revisiting Continuous-Time Trajectory Estimation \\ via Gaussian Processes and the Magnus Expansion}
\author{
 {\normalfont Timothy D. Barfoot} \\
 Robotics Institute \\
 University of Toronto \\
 \texttt{tim.barfoot@utoronto.ca} 
 \and
 {\normalfont Cedric Le Gentil} \\
 Robotics Institute \\
 University of Toronto \\
 \texttt{cedric.legentil@utoronto.ca} 
 \and 
 {\normalfont Sven Lilge} \\
 Robotics Institute \\
 University of Toronto \\
 \texttt{sven.lilge@utoronto.ca} 
}
\date{}
\begin{document}

\maketitle
\title{Revisiting Continuous-Time Trajectory Estimation via Gaussian Processes and the Magnus Expansion}

\begin{abstract}
Continuous-time state estimation has been shown to be an effective means of (i) handling asynchronous and high-rate measurements, (ii) introducing smoothness to the estimate, (iii) post hoc querying the estimate at times other than those of the measurements, and (iv) addressing certain observability issues related to scanning-while-moving sensors. A popular means of representing the trajectory in continuous time is via a \ac{GP} prior, with the prior's mean and covariance functions generated by a \ac{LTV} \ac{SDE} driven by white noise. When the state comprises elements of Lie groups, previous works have resorted to a patchwork of local \acp{GP} each with a \ac{LTI} \ac{SDE} kernel, which while effective in practice, lacks theoretical elegance.  Here we revisit the full \ac{LTV} \ac{GP} approach to continuous-time trajectory estimation, deriving a global \ac{GP} prior on Lie groups via the Magnus expansion, which offers a more elegant and general solution.  We provide a numerical comparison between the two approaches and discuss their relative merits.
\end{abstract}

\section{Introduction}

The ability to estimate trajectories in continuous time has become increasingly popular in the robotics and computer vision communities over the past decade. Figure~\ref{fig:ct} illustrates the general problem in which we are interested.  We have a trajectory $\mbf{x}(t)$ that we would like to estimate, but we only have measurements $\mbf{y}_k$ at discrete times $t_k$.  We would like our estimated trajectory to be smooth and physically plausible, which means placing a prior on the trajectory that encodes our knowledge of the underlying motion.  We would also like to be able to handle measurements that arrive at high rates and/or asynchronously.  Finally, we would like to be able to query the trajectory at any time of interest, $\tau$, not just at the measurement times.  This is in contrast to traditional discrete-time estimation approaches where the trajectory is only estimated at the measurement times.  

\begin{figure}[b]
    \centering
    \includegraphics[width=0.9\textwidth]{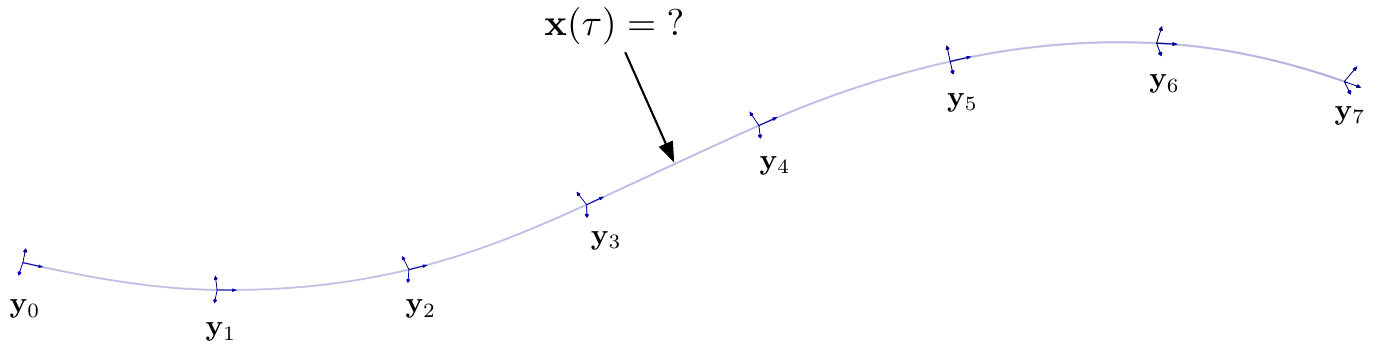}
    \caption{In continuous-time estimation, we consider that a sensor is moving smoothly through space over time.  At discrete times, the sensor takes measurements $\mbf{y}_k$ of the environment (e.g., images, lidar scans, etc.) that can be used to estimate the trajectory.  The trajectory itself is represented as a continuous-time function, $\mbf{x}(t)$, often using a \ac{GP} prior, regularizing the trajectory to be smooth and physically plausible.}
    \label{fig:ct}
\end{figure}

Continuous-time estimation addresses all of these requirements in a natural way:
\begin{itemize}
\item {\em We can easily handle high-rate and asynchronous measurements.}  This is accomplished by assigning each measurement to the trajectory at its time of acquisition, $t_k$.\footnote{Nominally, we end up with a discrete state at each measurement time; however, there are ways to reduce the number of states if necessary \citep{tong_ijrr13b,anderson_iros15}.}

\item {\em We can query the trajectory at any time of interest, $\tau$, not just at the measurement times, $t_k$.}  This is useful for applications where we might use one sensor to estimate the trajectory but another to build a map, for example.  It is also useful in control where we might want to query the trajectory at a high rate to compute control inputs. 

\item {\em The continuous-time representation can impose smoothness on the estimated trajectory.}  Often the object to which sensors are attached is moving smoothly through the world because it is governed by physics, and we can use this knowledge to regularize the solution through the choice of motion prior.

\item {\em The continuous-time representation can be used to overcome observability issues.}  For sweeping-while-moving sensors such as rolling-shutter cameras or spinning lidars, the trajectory is often under-constrained by the measurements alone since each pixel or point is acquired from a unique pose.  Our motion prior serves to regularize the solution and can be physically motivated, such as a constant-velocity prior or a constant-acceleration prior.
\end{itemize}

There are essentially two main ways to represent the continuous-time trajectory: (i) using a parametric representation such as splines, or (ii) using a non-parametric representation such as a \ac{GP}.  Both approaches have their merits, and in this work we focus on the \ac{GP} approach.  For a recent detailed literature review of both approaches, we refer the reader to \citet{talbot_tro25}.  We will provide a summary of the \ac{GP} literature here.

The seminal reference for \acp{GP} is \citet{rasmussen06}; this contains some discussion of estimation for dynamic systems, but they do not extend to Lie groups.  In the context of filtering and smoothing, \citet{sarkka06} showed the equivalence between discrete-time estimation and continuous-time estimation at the measurement times (i.e., chain-like graph structures), which is quite related to the \ac{GP} approach we discuss here.  

Within robotics, \citet{tong_crv12, tong_ijrr13b} made the initial connection between \ac{GP} regression and continuous-time estimation for general batch estimation problems (i.e., arbitrarily connected graph structures such as \ac{SLAM}).  This was followed up by \citet{barfoot_rss14,anderson_ar15}, who showed how to construct a kernel function from a physically motivated motion prior that resulted in a block-tridiagonal inverse kernel matrix, which is key to efficient computation; they also made the initial connection between the \ac{GP} approach and sparse factor graphs and coined the term \ac{STEAM}.  \citet{anderson_iros15, anderson_phd16} showed how to apply the \ac{GP} continuous-time approach (specifically a \ac{WNOA} motion model) on Lie groups using the `local GP' approach, which we use as the baseline in this article; all follow-on works that work with \acp{GP} on Lie group have used the local approach.

\citet{yanIncrementalSparseGP2015, yanIncrementalSparseGP2017} showed that once the \ac{GP} approach was formulated as a factor graph it could be incrementalized using the Bayes-tree ideas of \citet{kaess08, kaessISAM2IncrementalSmoothing2012}.  \citet{dongSparseGaussianProcesses2018} generalized the \ac{GP} formulation to different Lie groups and provided examples.  \citet{mukadamContinuoustimeGaussianProcess2018} extended the approach beyond estimation to include motion-planning problems.

\citet{tang_ral19}, \citet{nguyen_tro25} showed how to construct a \ac{WNOJ} prior for Lie groups that worked well for vehicles experiencing acceleration.  \citet{wong_ral20} showed how to construct a Singer prior for Lie groups that was trained on data to better fit a real-world motion profile. \citet{legentilGaussianProcessPreintegration2020, legentilContinuousLatentState2023} showed how to use \acp{GP} to preintegrate inertial measurements in continuous time; this work is not directly connected to the \ac{GP} approach discussed in this article but is an important related topic.  

\citet{lilge_ijrr22,lilge_tro25} showed that the \ac{GP} approach could be used to also estimate the shape of continuum robots. \citet{duembgen_ral23,barfoot_ijrr25} showed how to incorporate \ac{GP} motion priors within a certifiably optimal estimation framework, including on Lie groups. \citet{johnson_arxiv24} provided a detailed experimental comparison of parametric and nonparametric methods for continuous-time estimation and showed how to incorporate motion priors into parametric methods to make them essentially equivalent to nonparametric methods. 

\citet{barfoot_ser24} provided a new way to carry out covariance interpolation during querying that was simpler than \citet{anderson_phd16}.  \citet{lilge_rob25} extended the Lie-group \ac{GP} motion priors to include non-zero mean functions, allowing more specific prior motion to be captured.  \citet{burnett_tro25} showed state-of-the-art performance on radar- and lidar-inertial estimation using the \ac{GP} approach.

All of these methods resort to the local \ac{GP} approach of \citet{anderson_iros15} when working with Lie groups.  In this article, we revisit this choice and show how to construct a global \ac{GP} prior on Lie groups by directly solving the \ac{LTV} \ac{SDE} using the Magnus expansion \citep{magnus54,blanes09}.  This leads to a more elegant and general solution that avoids the patchwork of local \acp{GP}.  

The rest of this article is organized as follows.  In Section~\ref{sec:setup}, we set up the problem and provide the necessary background.  In Section~\ref{sec:magnus}, we introduce the Magnus expansion and show how to use it to solve the \ac{LTV} \ac{SDE} that arises from linearizing the motion model on Lie groups.  In Section~\ref{sec:gpprior}, we show how to construct the global \ac{GP} prior using the results from Section~\ref{sec:magnus}.  In Section~\ref{sec:fg_impl}, we discuss implementation details.  In Section~\ref{sec:numerical_example}, we provide some experimental results.  Finally, in Section~\ref{sec:conclusion}, we conclude and discuss future work.


\section{Setup and Problem Statement}
\label{sec:setup}

\begin{figure}[t]
    \centering
    \includegraphics[width=\textwidth]{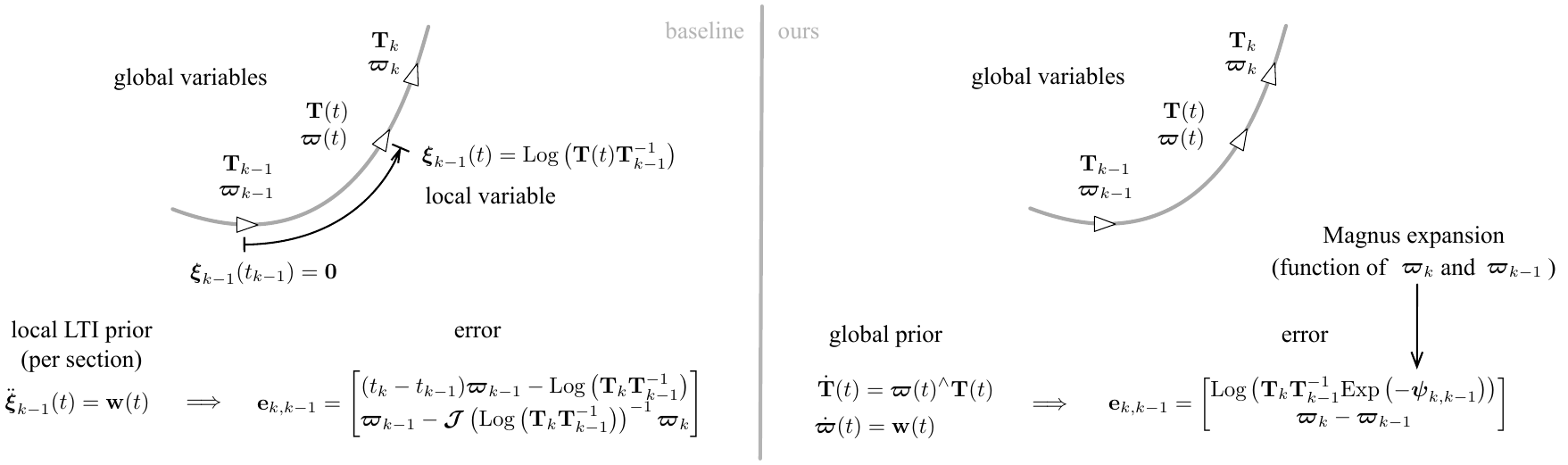}
    \caption{High-level comparison of two different ways to define a Gaussian process prior for continuous-time trajectory estimation: (left) the baseline approach using local \ac{LTI} \acp{SDE} stitched together, and (right) the proposed approach using a global \ac{SDE}.}
    \label{fig:gpcompare}
\end{figure}

A \ac{GP} prior can be generated by a \ac{LTV} \ac{SDE} as in
\begin{equation}
    \dot{\mbf{x}}(t) = \mbf{A}(t) \mbf{x}(t) + \mbf{L}(t) \mbf{w}(t), \quad \mbf{w}(t) \sim \mathcal{GP}(\mbf{0}, \mbf{Q}_c \delta(t-t')),
    \label{eq:ltv_sde}
\end{equation}
where $\mbf{x}(t)$ is the state, $\mbf{A}(t)$ is the (possibly time-varying) system matrix, $\mbf{L}(t)$ is the (possibly time-varying) noise input matrix, and $\mbf{w}(t)$ is a white process noise with power spectral density $\mbf{Q}_c$.  However, when the state $\mbf{x}(t)$ lies on a Lie group (e.g., $SE(3)$), then \eqref{eq:ltv_sde} is not directly applicable.  

Instead, our desired motion model takes the form (e.g., white-noise on acceleration)
\begin{equation}
    \dot{\mbf{T}}(t) = \mbs{\varpi}(t)^\wedge \mbf{T}(t), \quad \dot{\mbs{\varpi}}(t) = \mbf{w}(t), \quad \mbf{w}(t) \sim \mathcal{GP}(\mbf{0}, \mbf{Q}_c \delta(t-t')),
    \label{eq:lie_sde}
\end{equation}
where $\mbf{T}(t) \in SE(3)$ is the pose, $\mbs{\varpi}(t) \in \mathbb{R}^6$ is the body-centric velocity,  and $(\cdot)^\wedge$ is the operator that maps a vector in $\mathbb{R}^6$ to an element of the Lie algebra $\mathfrak{se}(3)$\footnote{We use the notation of \citep{barfoot_ser24} when discussing Lie groups plus the shorthands $\mbox{Exp}(\cdot) = \exp((\cdot)^\wdg)$ and $\mbox{Log}(\cdot) = \mbox{ln}(\cdot)^\vee$ \citep{sola18}.}.  Solving \eqref{eq:lie_sde} directly is difficult due to the nature of the Lie group and involves complex propagation of uncertainty on manifolds via the Fokker-Planck equation.  To keep things tangible, we will use $SE(3)$ as our running example throughout this work, but the ideas presented here are applicable to other matrix Lie groups as well.

If we linearize the motion model about a nominal trajectory $\left\{ \mbf{T}_{\rm op}(t), \mbs{\varpi}_{\rm op}(t) \right\}$, this leads to an \ac{LTV} \ac{SDE}, which is still difficult to solve directly.  A common workaround in the literature is to instead use a stitched sequence of \ac{LTI} \acp{SDE} \citep{anderson_iros15,barfoot_ser24}; however, while this can work well in practice, it is inelegant and can be difficult to carry out querying at arbitrary times.  Here we propose to use the Magnus expansion \citep{magnus54,blanes09} to directly solve the \ac{LTV} \ac{SDE} that arises from linearizing \eqref{eq:lie_sde} about a nominal trajectory, leading to a single global \ac{GP} prior on the Lie group.  The details of this are nontrivial and comprise the main contribution of this work. 

Ultimately, what we hope to achieve with our \ac{GP} prior is to create binary factors between pairs of states along the trajectory that can be used in a batch estimation framework.  These binary factors take the form
\begin{equation}
    || \mbf{e}_{k,k-1} ||_{\mbf{Q}_{k,k-1}}^2,
\end{equation}
where $\mbf{e}_{k,k-1}$ is the error between the state at time $t_k$ and the state at time $t_{k-1}$, and $\mbf{Q}_{k,k-1}$ is the corresponding covariance.  Figure~\ref{fig:gpcompare} contrasts our proposed approach with the baseline approach using local \ac{LTI} \acp{SDE}; the difference comes down to the formulation of the motion-prior factors.  When the motion prior is combined with other factors based on sensor measurements, the state estimation problem amounts to an optimization problem, which we typically minimize iteratively using a variant of a Gauss-Newton solver. We thus have three tasks in our immediate future: (i) discretize~\eqref{eq:lie_sde} temporally, (ii) formulate the error $\mbf{e}_{k,k-1}$ and covariance $\mbf{Q}_{k,k-1}$, and (iii) linearize the error about the current trajectory estimate to prepare for Gauss-Newton optimization. 

However, it is not obvious in what order we should carry out these three tasks.  Figure~\ref{fig:commdiag} shows a commutative diagram illustrating the various pathways from the desired \ac{GP} prior motion model in the top left to the linearized error in the bottom right.  Should we discretize temporally first, then form the error, then linearize?  Or should we linearize first, then discretize temporally, then form the error? The answers to these questions will have important implications for the implementation of the \ac{GP} prior in practice.  It turns out that all pathways lead to the same result if we are careful to introduce consistent approximations.


\section{Magnus Expansion Background}
\label{sec:magnus}

Central to our development is the use of the Magnus expansion \citep{magnus54,blanes09}, which allows us to solve an \ac{LTV} differential equation of the form 
\begin{equation}
    \dot{\mbf{X}}(t) = \mbf{A}(t) \mbf{X}(t)
\end{equation}
in a principled manner, particularly when $\mbf{A}(t)$ is noncommutative.  The Magnus expansion states that the solution can be written as
\begin{equation}
    \mbf{X}_k = \exp\left( \mbs{\Omega}_{k,k-1} \right) \mbf{X}_{k-1},
\end{equation}
where $\mbf{X}_k = \mbf{X}(t_k)$ and the `Magnus matrix', $\mbs{\Omega}_{k,k-1} = \sum_{i=1}^\infty \mbs{\Omega}_{i,k,k-1}$, is given by an infinite series of integrals and commutators of $\mbf{A}(t)$ evaluated at different times.  Appendix~\ref{app:magnus} provides a brief derivation of the Magnus expansion as we require it.

While the Magnus expansion is very general and its full power could be brought to bear on our problem, in the interest of simplicity we will restrict ourselves to the case where $\mbf{A}(t)$ varies linearly with time from $t_{k-1}$ to $t_k$, which is appropriate for small time intervals and our running example of a \ac{WNOA} motion model on $SE(3)$\footnote{If we were keeping more derivatives in our state (e.g., \ac{WNOJ}) then we could use more sophisticated approximations of $\mbf{A}(t)$ (e.g., cubic).}.  In this case, we can write
\begin{equation}\label{eq:At_linear}
    \mbf{A}(t) = \mbf{A}_{k-1} + \frac{t - t_{k-1}}{\Delta t_k} \left(\mbf{A}_k - \mbf{A}_{k-1} \right),
\end{equation}
where $\Delta t_k = t_k - t_{k-1}$.  With this assumption, the integrals in the Magnus expansion can be evaluated analytically, leading to closed-form expressions for $\mbs{\Omega}_{k,k-1}$ up to any desired order.  The first four terms in this case are
\begin{subequations}\label{eq:magnus_terms}
\begin{align}
    \mbs{\Omega}_{1,k,k-1} &= \frac{\Delta t_k}{2}  \left( \mbf{A}_{k-1} + \mbf{A}_k \right), \\
    \mbs{\Omega}_{2,k,k-1} &= \frac{\Delta t_k^2}{12}  [\mbf{A}_k, \mbf{A}_{k-1}], \\
    \mbs{\Omega}_{3,k,k-1} &= \frac{\Delta t_k^3 }{240} [\mbf{A}_k - \mbf{A}_{k-1}, [\mbf{A}_k, \mbf{A}_{k-1}]], \\
    \mbs{\Omega}_{4,k,k-1} &= -\frac{\Delta t_k^4}{5040}  [\mbf{A}_k - \mbf{A}_{k-1}, [\mbf{A}_k - \mbf{A}_{k-1}, [\mbf{A}_k, \mbf{A}_{k-1}]]] - \frac{\Delta t_k^4}{720}  [\mbf{A}_k, [\mbf{A}_{k-1}, [\mbf{A}_k, \mbf{A}_{k-1}]]],
\end{align}
\end{subequations}
where $[\mbf{A}, \mbf{B}] = \mbf{A}\mbf{B} - \mbf{B}\mbf{A}$ is the matrix commutator (also the Lie bracket).  By including higher-order terms in the expansion, we can achieve greater accuracy in the solution. 
This will allow us to derive the discrete-time state transition matrix and process noise covariance needed for our \ac{GP} prior.

In the specific case that $\mbf{A}(t) = \mbs{\varpi}(t)^\wedge$ as in \eqref{eq:lie_sde}, the Magnus expansion will yield a solution that respects the Lie group structure, allowing us to propagate the state and its uncertainty in a manner consistent with the underlying geometry of the problem.  In detail, we have that 
\begin{equation}
\mbf{T}_k = \mbox{Exp}\left( \mbs{\psi}_{k,k-1} \right) \mbf{T}_{k-1},
\end{equation} 
where \citep{huber20,barfoot_ser24}
\begin{equation}
    \mbs{\psi}_{k,k-1} = \frac{\Delta t_k }{2} \left( \mbs{\varpi}_{k} + \mbs{\varpi}_{k-1} \right) + \frac{\Delta t_k^2}{12}  \mbs{\varpi}_k^\Wdg \mbs{\varpi}_{k-1} + \frac{\Delta t_k^3}{240} \left(\mbs{\varpi}_k - \mbs{\varpi}_{k-1}\right)^\Wdg \mbs{\varpi}_{k}^\Wdg \mbs{\varpi}_{k-1} + \cdots, 
\end{equation}
which we refer to as a `Magnus vector' and where $(\cdot)^\Wdg = \mbox{ad}\left((\cdot)^\wdg\right)$ is the adjoint operator that maps a vector in $\mathbb{R}^6$ to the adjoint representation of the Lie algebra $\mathfrak{se}(3)$.
We next use these results to build our \ac{GP} prior on Lie groups.  The strategy will be to delay choosing the number of terms in the Magnus expansion until the very end, allowing us to trade off accuracy and computational cost.


\section{Building the GP Prior on Lie Groups}
\label{sec:gpprior}

\begin{figure}[t]
    \centering
    \includegraphics[width=\textwidth]{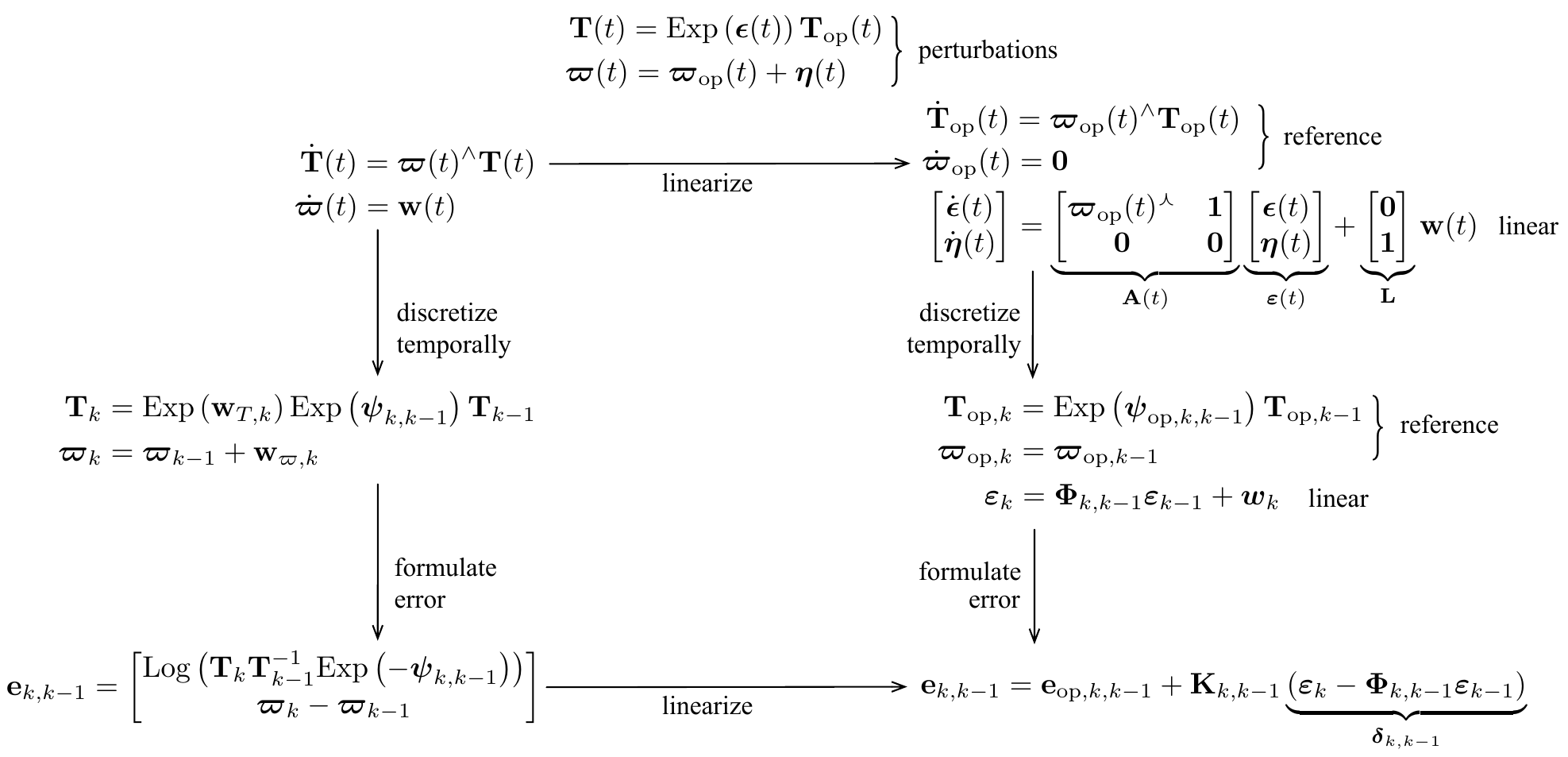}
    \caption{Commutative diagram showing the various pathways from the desired continuous-time Gaussian process prior motion model in the top left to the discrete-time linearized error in the bottom right.  The key to making this diagram self-consistent is the use of the Magnus expansion.  Symbols explained in the text.}
    \label{fig:commdiag}
\end{figure}

We will use Figure~\ref{fig:commdiag} as a guide to building our \ac{GP} prior on Lie groups.  We begin at the top left with the motion model in \eqref{eq:lie_sde} and initially proceed counterclockwise around the diagram.  
We have seen in the previous section that using the Magnus expansion allows us to discretize the \ac{LTV} \ac{SDE} from~\eqref{eq:lie_sde} directly, leading to a discrete-time motion model.  Accounting for the process noise, we have
\beqn{}
    \mbf{T}_k & = & \mbox{Exp}\left( \mbf{w}_{T,k}\right) \mbox{Exp}\left( \mbs{\psi}_{k,k-1} \right) \mbf{T}_{k-1}, \\
    \mbs{\varpi}_k & = & \mbs{\varpi}_{k-1} + \mbf{w}_{\varpi,k}, 
\eeqn
where the discrete-time process noise obeys
\begin{equation}
    \mbf{w}_k = \begin{bmatrix} \mbf{w}_{T,k} \\ \mbf{w}_{\varpi,k} \end{bmatrix} \sim \mathcal{N}\left( \mbf{0}, \mbf{Q}_{k,k-1} \right),
\end{equation}
with $\mbf{Q}_{k,k-1}$ the discrete-time process noise covariance still to be determined; it will be a function of $\mbf{Q}_c$. Next, we form the error between the states at times $t_{k-1}$ and $t_k$ as
\begin{equation}\label{eq:err}
    \mbf{e}_{k,k-1} = \bbm \mbf{e}_{T,k,k-1} \\ \mbf{e}_{\varpi,k,k-1} \ebm = \begin{bmatrix} \mbox{Log}\left( \mbf{T}_k \mbf{T}_{k-1}^{-1} \text{Exp}\left( -\mbs{\psi}_{k,k-1} \right) \right) \\ \mbs{\varpi}_k - \mbs{\varpi}_{k-1} \end{bmatrix}.
\end{equation}
Our binary factor is then given by $|| \mbf{e}_{k,k-1} ||_{\mbf{Q}_{k,k-1}}^2$, as desired.

Next, we need to linearize this error about the current trajectory estimate $\left\{ \mbf{T}_{\rm op}(t), \mbs{\varpi}_{\rm op}(t) \right\}$.  We introduce perturbations as
\beqn{perts}
    \mbf{T}_k & = & \mbox{Exp}\left( \mbs{\ep}_k \right) \mbf{T}_{{\rm op},k}, \\
    \mbs{\varpi}_k & = &\mbs{\varpi}_{{\rm op},k} + \mbs{\eta}_k,  
\eeqn
The tricky part of this is that the `Magnus vector', $\mbs{\psi}_{k,k-1}$, depends on the body-centric velocities at both $t_{k-1}$ and $t_k$, which themselves are being perturbed.  For now we will linearize $\mbs{\psi}_{k,k-1}$ as
\begin{equation}\label{eq:man_lin}
    \mbs{\psi}_{k,k-1} \approx \mbs{\psi}_{{\rm op},k,k-1} + \mbf{M}_k \mbs{\eta}_k + \mbf{M}_{k-1} \mbs{\eta}_{k-1},
\end{equation}
where $\mbf{M}_k$ and $\mbf{M}_{k-1}$ are Jacobians to be determined based on the number of terms retained in the Magnus expansion.  Substituting \eqref{eq:perts} and \eqref{eq:man_lin} into \eqref{eq:err} and linearizing leads to the desired linearized error:
\begin{multline}
    \mbf{e}_{k,k-1} \approx \mbf{e}_{{\rm op},k,k-1} + \bbm \Jbig(\mbf{e}_{{\rm op},T,k,k-1})^{-1} & \mbf{0} \\ \mbf{0} & \mbf{1} \ebm \left( \bbm \mbf{1} & -\Jbig\left(\mbs{\psi}_{{\rm op},k,k-1}\right) \mbf{M}_k \\ \mbf{0} & \mbf{1} \ebm \bbm \mbs{\ep}_k \\ \mbs{\eta}_k \ebm \right. \\
    \left. - \bbm \exp\left(\mbs{\psi}_{{\rm op},k,k-1}^\Wdg \right) & \Jbig\left(\mbs{\psi}_{{\rm op},k,k-1}\right) \mbf{M}_{k-1} \\ \mbf{0} & \mbf{1} \ebm \bbm \mbs{\ep}_{k-1} \\ \mbs{\eta}_{k-1} \ebm \right),
\end{multline}
where $\Jbig(\cdot)$ is the left Jacobian of $SE(3)$ \citep{barfoot_ser24}. Next, we factor out one of the coefficient matrices to be able to write 
\begin{multline}\label{eq:lin_err_ccw}
    \mbf{e}_{k,k-1} \approx \mbf{e}_{{\rm op},k,k-1} + \underbrace{\bbm \Jbig(\mbf{e}_{{\rm op},T,k,k-1})^{-1} & \mbf{0} \\ \mbf{0} & \mbf{1} \ebm \bbm \mbf{1} & -\Jbig\left(\mbs{\psi}_{{\rm op},k,k-1}\right) \mbf{M}_k \\ \mbf{0} & \mbf{1} \ebm}_{\mbf{K}_{k}}   \\
    \times \; \Biggl(  \underbrace{\bbm \mbs{\ep}_k \\ \mbs{\eta}_k \ebm}_{\mbs{\varepsilon}_k}  - \underbrace{\bbm \exp\left(\mbs{\psi}_{{\rm op},k,k-1}^\Wdg \right) &  \Jbig\left(\mbs{\psi}_{{\rm op},k,k-1}\right) \left( \mbf{M}_k + \mbf{M}_{k-1} \right) \\ \mbf{0} & \mbf{1} \ebm}_{\mbs{\Phi}_{k,k-1}} \underbrace{\bbm \mbs{\ep}_{k-1} \\ \mbs{\eta}_{k-1} \ebm}_{\mbs{\varepsilon}_{k-1}} \Biggr)
\end{multline}
or simply 
\begin{equation}
    \mbf{e}_{k,k-1} \approx \mbf{e}_{{\rm op},k,k-1} + \mbf{K}_{k} \, \mbs{\delta}_{k,k-1}, \quad \mbs{\delta}_{k,k-1} = \mbs{\varepsilon}_k - \mbs{\Phi}_{k,k-1} \mbs{\varepsilon}_{k-1}.
\end{equation}
This is our desired linearized error expression, completing the counterclockwise path around the commutative diagram in Figure~\ref{fig:commdiag}.  Importantly, we see that the perturbation variables form another kind of error, $\mbs{\delta}_{k,k-1}$, and the matrix $\mbf{K}_{k}$ converts this error into the original error space.  

We next consider the clockwise path around the diagram, starting again at the top left with \eqref{eq:lie_sde}.  We first linearize the motion model about the nominal trajectory, leading to a separation between the reference trajectory,
\begin{equation}
    \dot{\mbf{T}}_{\rm op}(t) = \mbs{\varpi}_{\rm op}(t)^\wedge \mbf{T}_{\rm op}(t), \quad \dot{\mbs{\varpi}}_{\rm op}(t) = \mbf{0},
\end{equation}
and the linear model in terms of the perturbations,
\begin{equation}\label{eq:pert_sde}
    \bbm \dot{\mbs{\ep}}(t) \\ \dot{\mbs{\eta}}(t) \ebm = \underbrace{\bbm \mbs{\varpi}_{\rm op}(t)^\Wdg & \mbs{1} \\ \mbf{0} & \mbf{0} \ebm}_{\mbf{A}(t)} \underbrace{\bbm \mbs{\ep}(t) \\ \mbs{\eta}(t) \ebm}_{\mbs{\varepsilon}(t)}  + \underbrace{\bbm \mbf{0} \\ \mbf{1} \ebm}_{\mbf{L}} \mbf{w}(t),
\end{equation}
which is exactly in the form of~\eqref{eq:ltv_sde}.  Now, if we discretize this \ac{LTV} \ac{SDE} temporally using the Magnus expansion as before, we obtain for the reference
\beqn{}
    \mbf{T}_{{\rm op},k} & = & \mbox{Exp}\left( \mbs{\psi}_{{\rm op},k,k-1} \right) \mbf{T}_{{\rm op},k-1}, \\
    \mbs{\varpi}_{{\rm op},k} & = & \mbs{\varpi}_{{\rm op},k-1}.
\eeqn
For the \ac{LTV} \ac{SDE} in the perturbations, we must employ the Magnus expansion again but now we must work with $\mbf{A}(t)$ defined above.  We assume that we know $\mbf{A}(t)$ at the discrete times $t_{k-1}$ and $t_k$ based on the reference trajectory, and we again use the linear-in-time approximation from \eqref{eq:At_linear}.  {\em What is remarkable is that if we use the same number of terms in the Magnus expansion as we did in $\mbs{\psi}_{k,k-1}$, we obtain exactly the same discrete-time state transition matrix, $\mbs{\Phi}_{k,k-1}$, as we did in the counterclockwise path!}  This is not a coincidence; it is a direct consequence of the properties of the Magnus expansion.  Appendix~\ref{app:commdiag_proof} provides a proof of the equivalence of the discrete-time state transition matrices obtained via the clockwise and counterclockwise paths.  Thus, we have    
\begin{equation}\label{eq:stm}
    \mbs{\Phi}_{k,k-1} = \bbm \exp\left(\mbs{\psi}_{{\rm op},k,k-1}^\Wdg \right) &  \Jbig\left(\mbs{\psi}_{{\rm op},k,k-1}\right) \left( \mbf{M}_k + \mbf{M}_{k-1} \right) \\ \mbf{0} & \mbf{1} \ebm,
\end{equation}
where $\mbf{M}_k$ and $\mbf{M}_{k-1}$ are the same Jacobians as before and then 
\begin{equation}
    \mbs{\varepsilon}_k = \mbs{\Phi}_{k,k-1} \mbs{\varepsilon}_{k-1} + \mbs{w}_k,
\end{equation}
where $\mbs{w}_k \sim \mathcal{N}\left( \mbf{0}, \mbs{Q}_{k,k-1} \right)$ is a discrete-time process noise (different from $\mbf{w}_k$). Now, we can form the reference and linearized errors using the discrete-time models to be
\beqn{}
    \mbf{e}_{{\rm op},k,k-1} & = & \bbm \mbox{Log}\left( \mbf{T}_{{\rm op},k} \mbf{T}_{{\rm op},k-1}^{-1} \text{Exp}\left( -\mbs{\psi}_{{\rm op},k,k-1} \right) \right) \\ \mbs{\varpi}_{{\rm op},k} - \mbs{\varpi}_{{\rm op},k-1} \ebm, \\
    \mbs{\delta}_{k,k-1} & = & \mbs{\varepsilon}_k - \mbs{\Phi}_{k,k-1} \mbs{\varepsilon}_{k-1}. 
\eeqn
While this completes the clockwise path around the commutative diagram in Figure~\ref{fig:commdiag}, we still need to determine the discrete-time process noise covariance $\mbf{Q}_{k,k-1}$ to complete the \ac{GP} prior.

It turns out that both the clockwise and counterclockwise paths provide us with some advantages.  The counterclockwise path is important because it allows us to determine the matrix $\mbf{K}_{k}$ that maps the error in the perturbation space to the original error space.  The clockwise path is important because it allows us to determine the covariance associated with the process noise $\mbs{w}_k$, again using the Magnus expansion.  Our desired $\mbf{Q}_{k,k-1}$ is then simply given by
\begin{equation} \label{eq:Qk_conversion}
    \underbrace{E[\mbf{w}_k \mbf{w}_k^T]}_{\mbf{Q}_{k,k-1}} = \mbf{K}_{k} \, \underbrace{E[\mbs{w}_k \mbs{w}_k^T]}_{\mbs{Q}_{k,k-1}} \, \mbf{K}_{k}^T.
\end{equation}
We will focus on computing $\mbs{Q}_{k,k-1}$ in the next section.


\section{Calculating the Discrete-Time Process Noise Covariance}
\label{sec:proc_noise_cov}

As mentioned at the end of the last section, we will focus on calculating $\mbs{Q}_{k,k-1} = E[\mbs{w}_k \mbs{w}_k^T]$ using the Magnus expansion applied to the \ac{LTV} \ac{SDE} in \eqref{eq:pert_sde}, and then use \eqref{eq:Qk_conversion} to obtain $\mbf{Q}_{k,k-1}$.  The process noise covariance for an \ac{LTV} \ac{SDE} of the form in \eqref{eq:ltv_sde} is given by \citep{barfoot_ser24}
\begin{equation}\label{eq:proc_noise_cov}
    \mbs{Q}_{k,k-1} = \int_{t_{k-1}}^{t_k} \mbs{\Phi}(t_k, \tau) \mbf{L} \mbf{Q}_c \mbf{L}^T \mbs{\Phi}(t_k, \tau)^T d\tau,
\end{equation}
where $\mbs{\Phi}(t_k, \tau)$ is the state transition matrix from time $\tau$ to time $t_k$.  

There are a few choices for how to compute this integral.  
One straightforward approach is to break the integral into $N$ smaller pieces as follows:
\begin{equation}
    \mbs{Q}_{k,k-1} = \sum_{n=1}^N \mbs{\Phi}(t_k, s_n) \, \underbrace{\int_{s_{n-1}}^{s_n} \mbs{\Phi}(s_n, \tau) \mbf{L} \mbf{Q}_c \mbf{L}^T \mbs{\Phi}(s_n, \tau)^T d\tau }_{\mbs{Q}_n}\,\, \mbs{\Phi}(t_k, s_n)^T,
\end{equation}
where $s_0 = t_{k-1}$, $s_n = t_k$, and $s_n - s_{n-1} = h$ with $h = (t_k - t_{k-1})/N$ a constant.  The transition function $\mbs{\Phi}(t_k,s_n)$ can be evaluated using the Magnus expansion, for example.  If we make $N$ large enough, we can approximate the integral $\mbs{Q}_n$ in each subinterval.  If we assume that $\mbs{\varpi}(t) = \mbs{\varpi}_n$ is constant over each subinterval, then $\mbf{A}(t)$ is also constant and then the transition matrix is simply 
\begin{equation}
    \mbs{\Phi}(s_n, \tau) = \bbm \exp\left( (s_n-\tau) \mbs{\varpi}_n^\Wdg \right) & (s_n-\tau) \Jbig((s_n-\tau) \mbs{\varpi}_n) \\ \mbf{0} & \mbf{1} \ebm.
\end{equation}
The integrand becomes 
\begin{multline}
    \mbs{\Phi}(s_n, \tau) \mbf{L} \mbf{Q}_c \mbf{L}^T \mbs{\Phi}(s_n, \tau)^T = \bbm (s_n-\tau)^2 \Jbig((s_n-\tau) \mbs{\varpi}_n) \, \mbf{Q}_c \, \Jbig((s_n-\tau) \mbs{\varpi}_n)^T & (s_n-\tau) \Jbig((s_n-\tau) \mbs{\varpi}_n) \, \mbf{Q}_c \\ (s_n-\tau) \mbf{Q}_c \, \Jbig((s_n-\tau) \mbs{\varpi}_n)^T & \mbf{Q}_c \ebm \\
    = \bbm (s_n-\tau)^2 \mbf{Q}_c + \frac{1}{2} (s_n-\tau)^3 \left( \mbs{\varpi}_n^\Wdg \mbf{Q}_c + \mbf{Q}_c \mbs{\varpi}_n^{\Wdg^T} \right) + \cdots  & (s_n-\tau) \mbf{Q}_c + \frac{1}{2} (s_n-\tau)^2 \mbs{\varpi}_n^\Wdg \mbf{Q}_c + \cdots  \\ (s_n-\tau) \mbf{Q}_c + \frac{1}{2} (s_n-\tau)^2 \mbf{Q}_c \mbs{\varpi}_n^{\Wdg^T} + \cdots & \mbf{Q}_c \ebm,
\end{multline}
where we have expanded the left Jacobian in a Taylor series.  Integrating term-by-term from $\tau = s_{n-1}$ to $\tau = s_n$ gives
\begin{equation}
\mbs{Q}_n = \bbm \frac{h^3}{3} \mbf{Q}_c + \frac{h^4}{8} \left( \mbs{\varpi}_n^\Wdg \mbf{Q}_c + \mbf{Q}_c \mbs{\varpi}_n^{\Wdg^T} \right) + \cdots & \frac{h^2}{2} \mbf{Q}_c + \frac{h^3}{6} \mbs{\varpi}_n^\Wdg \mbf{Q}_c + \cdots  \\ \frac{h^2}{2} \mbf{Q}_c + \frac{h^3}{6} \mbf{Q}_c \mbs{\varpi}_n^{\Wdg^T} + \cdots & h \mbf{Q}_c \ebm.
\end{equation}
With $N$ large enough, $h$ will be quite small and therefore we can possibly approximate $\mbs{Q}_n$ using just the leading terms, whereupon it is a constant matrix. This approach is also appealing because it guarantees that $\mbs{Q}_n$ and hence $\mbs{Q}_{k,k-1}$ are positive semidefinite (assuming $\mbf{Q}_c$ is positive definite).  In fact, we can see $\mbs{Q}_n$ is positive definite when we keep just the first term in each block.  The downside of this approach is that we must choose $N$ large enough to ensure accuracy, which may lead to increased computational cost.

\section{Implementation}
\label{sec:fg_impl}

In this section, we sketch how to assemble our motion prior into a full estimation problem, expressed as a factor graph.  We begin by setting up the main problem where we solve for the state at all of the measurement times.  Then, we explain how to query the trajectory at an arbitrary number of times in between the measurement times after there main solve is complete.

\subsection{Main Solve}

We can simplify the notation of the linearized error in~\eqref{eq:lin_err_ccw} by writing it as 
\begin{equation}
\mbf{e}_{k,k-1} \approx \mbf{e}_{{\rm op},k,k-1} + \mbf{F}_{k,k-1} \mbs{\varepsilon}_{k-1} - \mbf{E}_{k} \mbs{\varepsilon}_k,
\end{equation}
where $\mbf{E}_k = -\mbf{K}_{k}$ and $\mbf{F}_{k,k-1} = -\mbf{K}_{k} \, \mbs{\Phi}_{k,k-1}$.  Then, the overall estimation problem can be expressed as a factor graph optimization problem as depicted in Figure~\ref{fig:ctfg}.  The large circular nodes represent the states to be estimated (at all the measurement times), while the small black circular nodes represent factors that encode probabilistic constraints between the states.  In particular, the motion prior factors connect consecutive states according to the linearized error expression above, with associated covariance $\mbf{Q}_{k,k-1}$ from \eqref{eq:Qk_conversion}.  Measurement factors  connect individual states to measurements according to the chosen measurement model, with associated measurement noise covariance.  Finally, an initial condition factor connects to the first state to anchor the trajectory estimate, with associated prior covariance.  

\begin{figure}
    \centering
    \includegraphics[width=\textwidth]{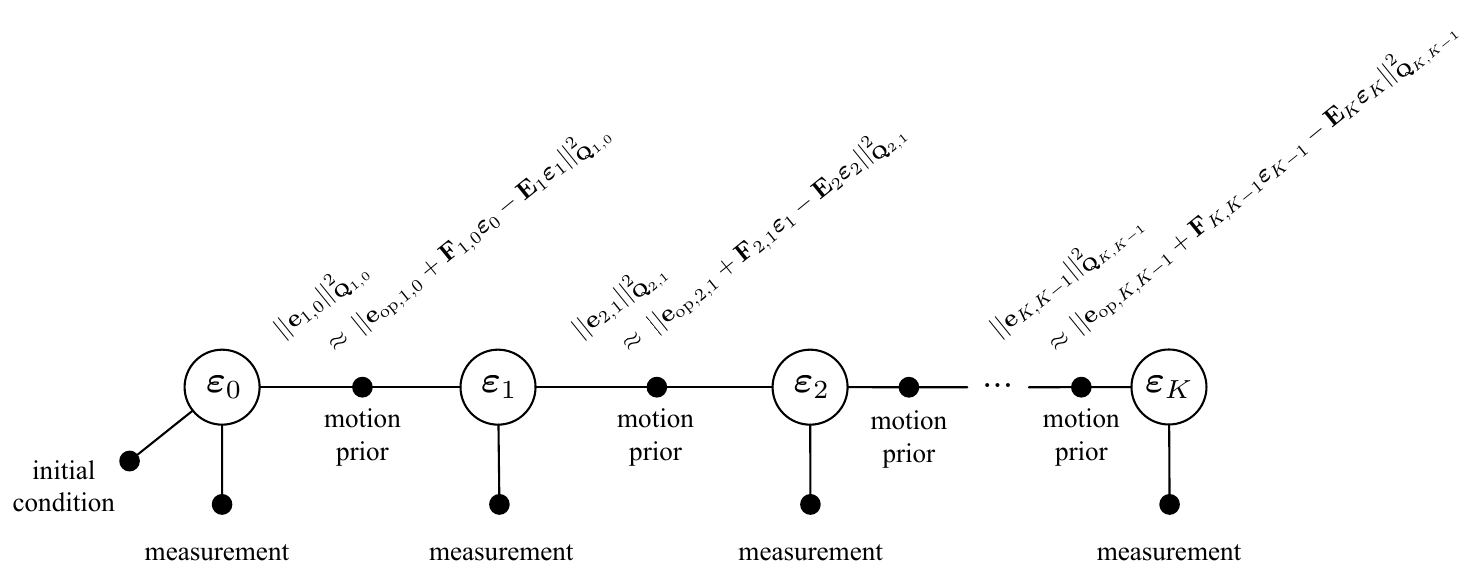}
    \caption{Factor graph representation of the \ac{GP} prior on $SE(3)$.  The large circular nodes represent the states to be estimated, while the small black circular nodes represent factors that encode probabilistic constraints between the states.}
    \label{fig:ctfg}
\end{figure}

A typical factor graph solver will solve this optimization problem iteratively, using our linearized error expressions, which are also shown as the approximations in Figure~\ref{fig:ctfg}. At each iteration, the solver finds the optimal perturbations $\mbs{\varepsilon}_k^\star = \bbm \mbs{\epsilon}_k^{\star^T} & \mbs{\eta}_k^{\star^T}   \ebm^T$ at each time step, then updates the trajectory estimate using
\begin{equation}
    \mbf{T}_{{\rm op},k} \leftarrow \mbox{Exp}\left( \mbs{\ep}^\star_k \right) \mbf{T}_{{\rm op},k}, \quad \mbs{\varpi}_{{\rm op},k} \leftarrow \mbs{\varpi}_{{\rm op},k} + \mbs{\eta}_k^\star,
\end{equation}
and then repeats the process until convergence.  At the last iteration we take our estimate to be $\est{\mbf{T}}_k = \mbf{T}_{{\rm op},k}$ and $\est{\mbs{\varpi}}_k = \mbs{\varpi}_{{\rm op},k}$.  At the end of the process, we have an estimate of the trajectory at all measurement times in terms of the mean, $\{\est{\mbf{T}}_k, \est{\mbs{\varpi}}_k\}$ and covariance, $\est{\mbf{P}}_k$, which we can also obtain from a typical factor graph solver.

\subsection{Querying}

We can also query the trajectory at time $\tau$ in between the measurement times $t_{k-1}$ and $t_k$.  To do this, we consider the factor graph in Figure~\ref{fig:gausselim}.  We insert a state at the query time, $\{\mbf{T}_\tau, \mbs{\varpi}_\tau\}$, or equivalently in the perturbation variables, $\mbs{\varepsilon}_\tau$.  This state is connected to the states at the surrounding measurement times via motion prior factors, just as in the main factor graph.  However, we then perform Gaussian elimination to remove this query state from the factor graph.  This operation results in a new binary factor connecting the two surrounding measurement times (equivalent to the original motion-prior factor between them in the main solve) and a conditional density for $\mbs{\varepsilon}_\tau$ given the surrounding states.  The mean and covariance of this conditional density can be computed using standard results from Gaussian elimination \citep{barfoot_ser24} and is given in the box in Figure~\ref{fig:gausselim}.  Then, after solving the main factor graph as before, we can use the conditional density to compute the mean and covariance of the queried state.  Importantly, we do not need to know the query time to conduct the main solve; we can perform this query operation after the main solve is complete and as many times as desired at different query times.

\begin{figure}
    \centering
    \includegraphics[width=0.9\textwidth]{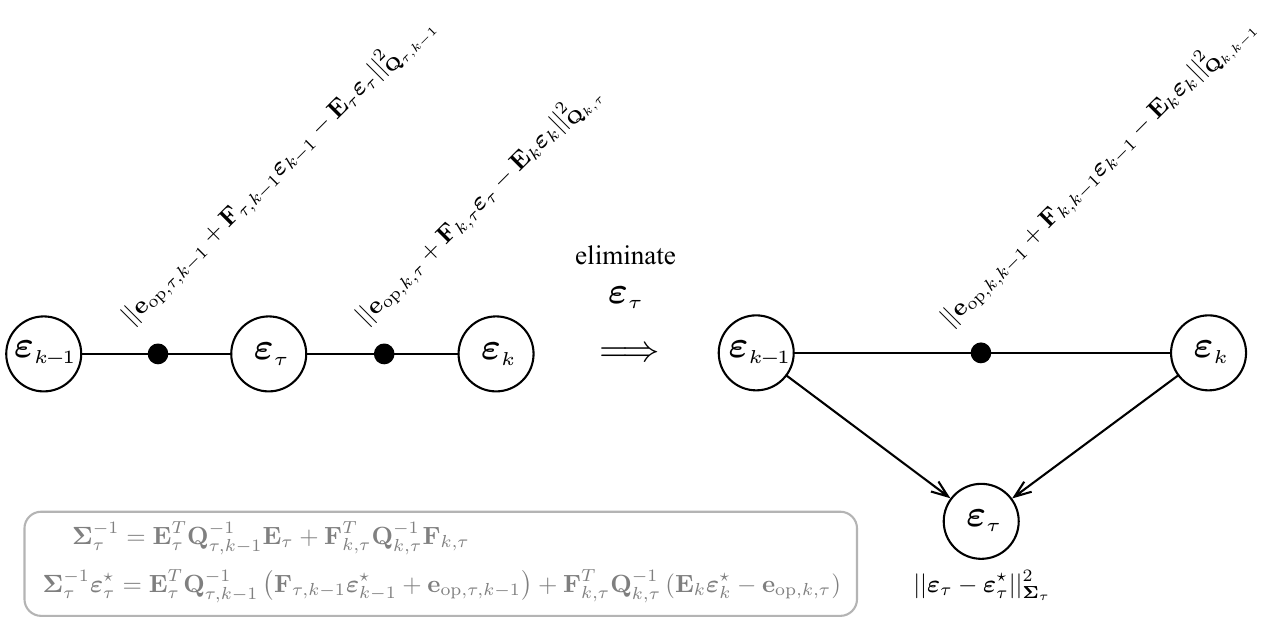}
    \caption{Factor graph representation of querying the trajectory at an arbitrary time between two measurement times.  We imagine that there was a state, $\{\mbf{T}_\tau, \mbs{\varpi}_\tau\}$, or $\mbs{\varepsilon}_\tau$ in the perturbation variables, in the original factor graph at the query time.  This state was eliminated to obtain a new binary factor (the motion prior between $k-1$ and $k$) and a conditional density for $\mbs{\varepsilon}_\tau$.  We can solve for the states at $k-1$ and $k$ in the main solve, then use the conditional density to obtain the mean and covariance of the queried state.  Moreover, we do not need to know the time of the query ahead of time; we can perform this operation after the main solve is complete and as many times as desired.}
    \label{fig:gausselim}
\end{figure}

For the mean, we look at the conditional density for $\mbs{\varepsilon}_\tau$ given in the box in Figure~\ref{fig:gausselim}.  Since the main solve will already have been conducted to convergence, the means of the perturbation variables $\mbs{\varepsilon}^\star_{k-1}$ and $\mbs{\varepsilon}^\star_{k}$ will have already converged to be close to zero.  Therefore, the mean of the perturbation at the query time, $\mbs{\varepsilon}^\star_\tau = \bbm \mbs{\ep}_{\tau}^{\star^T} & \mbs{\eta}_{\tau}^{\star^T} \ebm^T$, is simply the solution to
\begin{equation}
    \mbs{\Sigma}_\tau^{-1} \mbs{\varepsilon}^\star_\tau = \mbf{E}_\tau^T\mbf{Q}_{\tau,k-1}^{-1} \mbf{F}_{\tau,k-1}  \mbf{e}_{{\rm op},\tau,k-1} - \mbf{F}_{k,\tau}^T \mbf{Q}_{k,\tau}^{-1} \mbf{e}_{{\rm op}, k,\tau},
\end{equation}
which we carry out iteratively letting $\mbf{T}_{{\rm op},\tau} \leftarrow \mbox{Exp}\left( \mbs{\ep}^\star_{\tau} \right) \mbf{T}_{{\rm op},\tau}$ and $\mbs{\varpi}_{{\rm op},\tau} \leftarrow \mbs{\varpi}_{{\rm op},\tau} + \mbs{\eta}_{\tau}^\star$ at each iteration until convergence.  This small optimization problem for the mean is quite fast, converging in just a few iterations if we start it with a good initial guess (e.g., linear interpolation between the surrounding measurement times).

The covariance $\mbs{\Sigma}_\tau$ is given directly in the box in Figure~\ref{fig:gausselim}.  However, this only represents the covariance of the conditional density (given the states at times $k-1$ and $k$).  We need to convolve the conditional density with the marginal posterior density for the states at times $k-1$ and $k$ obtained from the main solve.  If we let the joint covariance of the states at times $k-1$ and $k$ be
\begin{equation}
\bbm \est{\mbf{P}}_{k-1} & \est{\mbf{P}}_{k,k-1}^T \\ \est{\mbf{P}}_{k,k-1} & \est{\mbf{P}}_{k} \ebm,
\end{equation}
then the covariance at the query time is given by
\begin{equation}
\est{\mbf{P}}_\tau = \mbs{\Sigma}_\tau + \bbm \mbs{\Lambda}_\tau & \mbs{\Psi}_\tau \ebm \bbm \est{\mbf{P}}_{k-1} & \est{\mbf{P}}_{k,k-1}^T \\ \est{\mbf{P}}_{k,k-1} & \est{\mbf{P}}_{k} \ebm \bbm \mbs{\Lambda}_\tau^T \\ \mbs{\Psi}_\tau^T \ebm,
\end{equation}
where
\beqn{}
\mbs{\Lambda}_\tau & = & \mbs{\Sigma}_\tau \mbf{E}_\tau^T \mbf{Q}_{\tau,k-1}^{-1} \mbf{F}_{\tau,k-1}, \\ 
\mbs{\Psi}_\tau & = & \mbs{\Sigma}_\tau \mbf{F}_{k,\tau}^T \mbf{Q}_{k,\tau}^{-1} \mbf{E}_k.
\eeqn
Our (marginal) estimate of the state at the query time $\tau$ is mean $\{\est{\mbf{T}}_\tau, \est{\mbs{\varpi}}_\tau\}$ and covariance $\est{\mbf{P}}_\tau$.  Again, we can perform this querying operation as many times as desired at different query times after the main solve is complete.  The cost of each query is $O(1)$ since it does not depend on the length of the trajectory or the number of measurement times.  

Compared to the baseline approach, the query of the mean is more involved as we must solve a small optimization problem, but the query of the covariance is simplified significantly as we avoid difficult conversions from the `local' states at the query time and can instead directly work with the perturbations to the global variables.


\section{Numerical Example}
\label{sec:numerical_example}

In this section, we will provide a numerical example to illustrate the proposed \ac{GP} prior on $SE(3)$.  We will compare different orders of the Magnus expansion and also compare to the baseline local \ac{GP} (i.e., STEAM) way of doing things \citep{barfoot_ser24}.  We will simulate a trajectory on $SE(3)$, generate noisy measurements, and then use our proposed method to estimate the trajectory.  We will evaluate the accuracy of the trajectory estimate, the quality of the interpolation between the measurement times, and the computational cost.  

Figures~\ref{fig:example_pose} and~\ref{fig:example_velocity} shows an example trajectory estimated using the proposed method with third-order Magnus expansion and 10 measurement times.  The top plot shows the trajectory on $SE(3)$, while the bottom plot shows the corresponding body-frame angular and linear velocity profiles over time.  All of the methods that we will compare look quite similar to this plot and we therefore do not show additional plots of the trajectories for the other methods.

\begin{figure}[h]
    \centering
    \includegraphics[width=\textwidth]{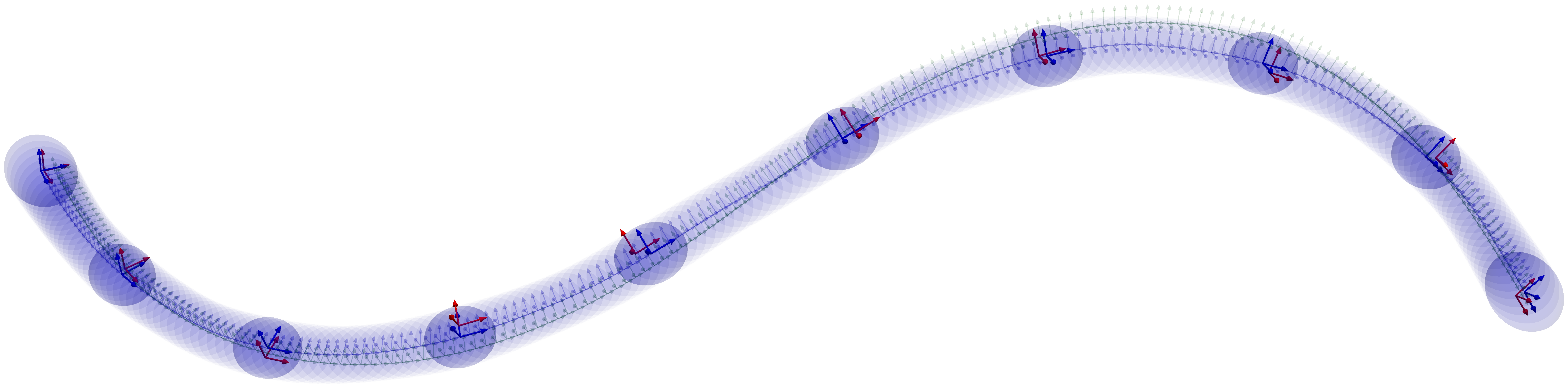}
    \caption{{\em Pose Estimate:} Example trajectory on $SE(3)$ estimated using the proposed \ac{GP} prior with third-order Magnus expansion and 10 measurement times.  The large dark-blue frames are the estimated poses, the smaller light-blue frames are interpolated poses after the main solve, the dark-blue ellipsoids are the marginal covariances at the estimated times, the light-blue ellipsoids are the interpolated covariances, the red frames are the noisy pose measurements, and the green frames are the ground-truth trajectory.}
    \label{fig:example_pose}
\end{figure}

\begin{figure}[t]
    \centering
    \includegraphics[width=\textwidth]{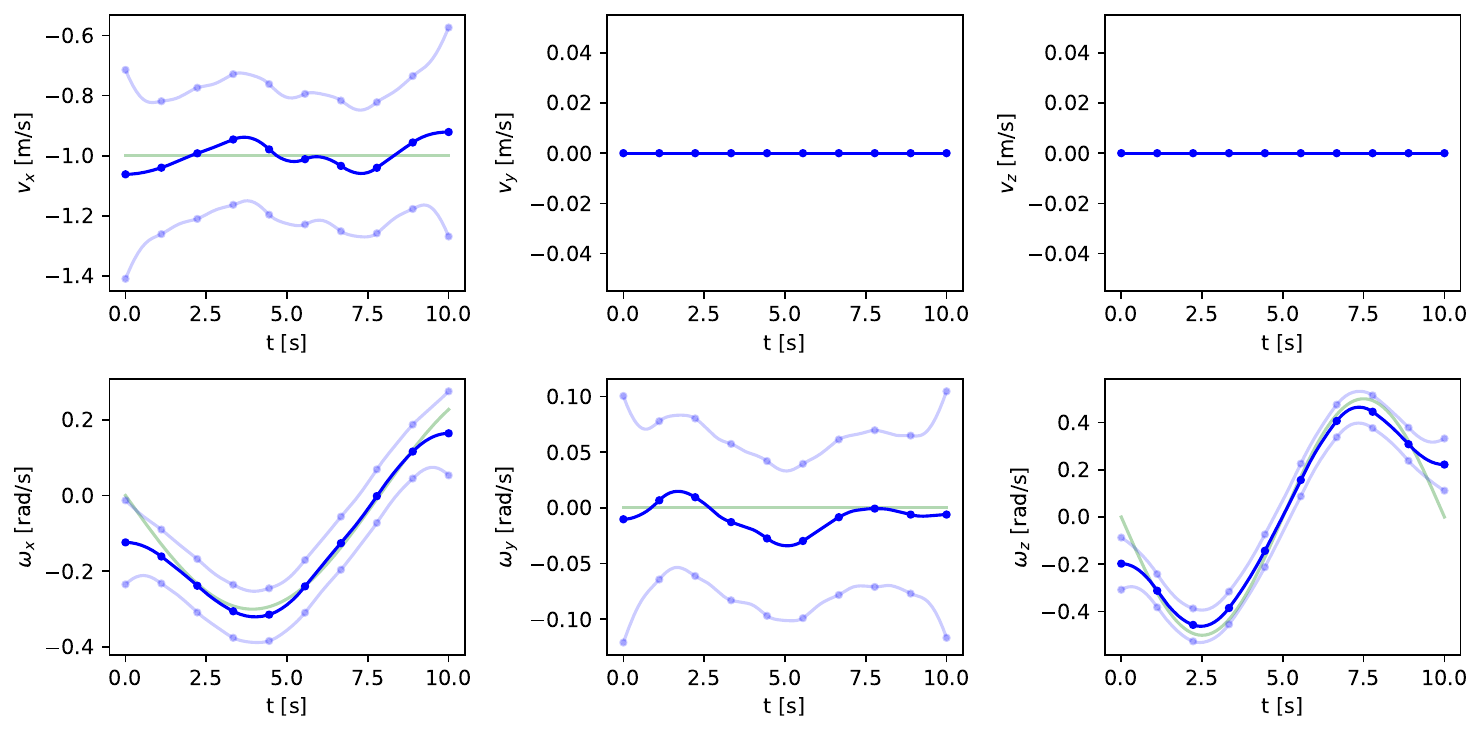}
    \caption{{\em Velocity Estimate:} Corresponding body-frame angular velocity (top row) and linear velocity (bottom row) profiles over time.  The dark-blue lines are the mean velocity estimate (dots for estimated time, lines for interpolation after the main solve), the light-blue dots and lines represent the covariances, and the green line shows the ground-truth velocity profile.  There are no direct measurements of velocity; these are inferred from the pose measurements and the motion prior.}
    \label{fig:example_velocity}
\end{figure}

\subsection{Estimation Accuracy}

To evaluate the accuracy of the trajectory estimate, we ran a large number of trials of the scenario depicted in Figure~\ref{fig:example_pose} with different flavours (i.e., 1, 2, 3 terms in the Magnus expansion) of our algorithm as well as the baseline.  We computed the root mean square error (RMSE) between the estimated trajectory and the ground-truth trajectory at a large number of evenly spaced times over the entire trajectory duration and varied the number of times at which we received noisy pose measurements from $K_{\rm meas} = 3\ldots15$.  At each value of $K_{\rm meas}$, we ran 50 trials with different random seeds to generate different noise realizations.  Figures~\ref{fig:ground_truth_errors_pose} and~\ref{fig:ground_truth_errors_velocity} show the RMSE results for the pose and velocity estimates, respectively.  When the spacing between measurements is large, the proposed methods do slightly better and as the number of measurement times increases, all methods converge to similar performance.

\begin{figure}[p]
    \centering
    \includegraphics[width=0.9\textwidth]{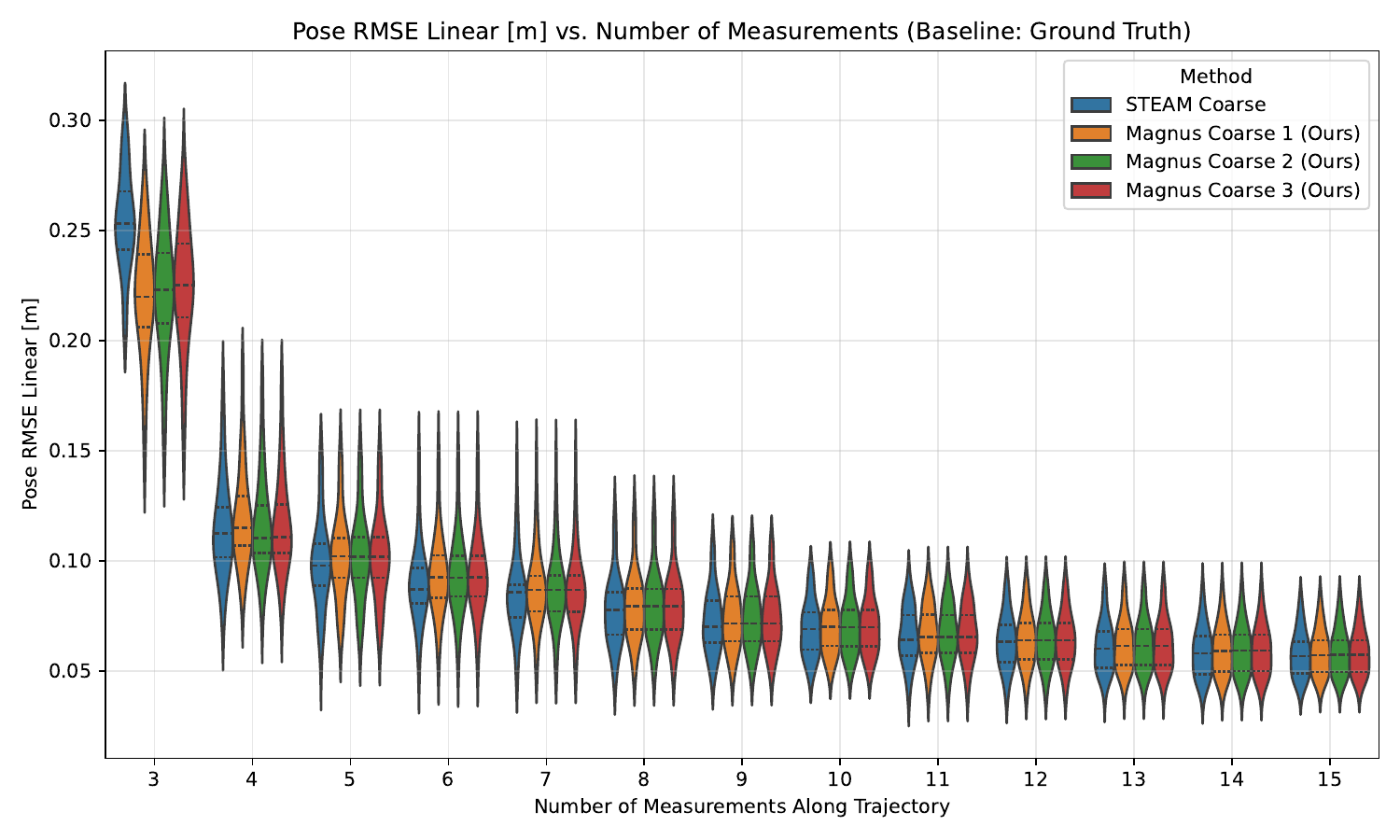}
    \includegraphics[width=0.9\textwidth]{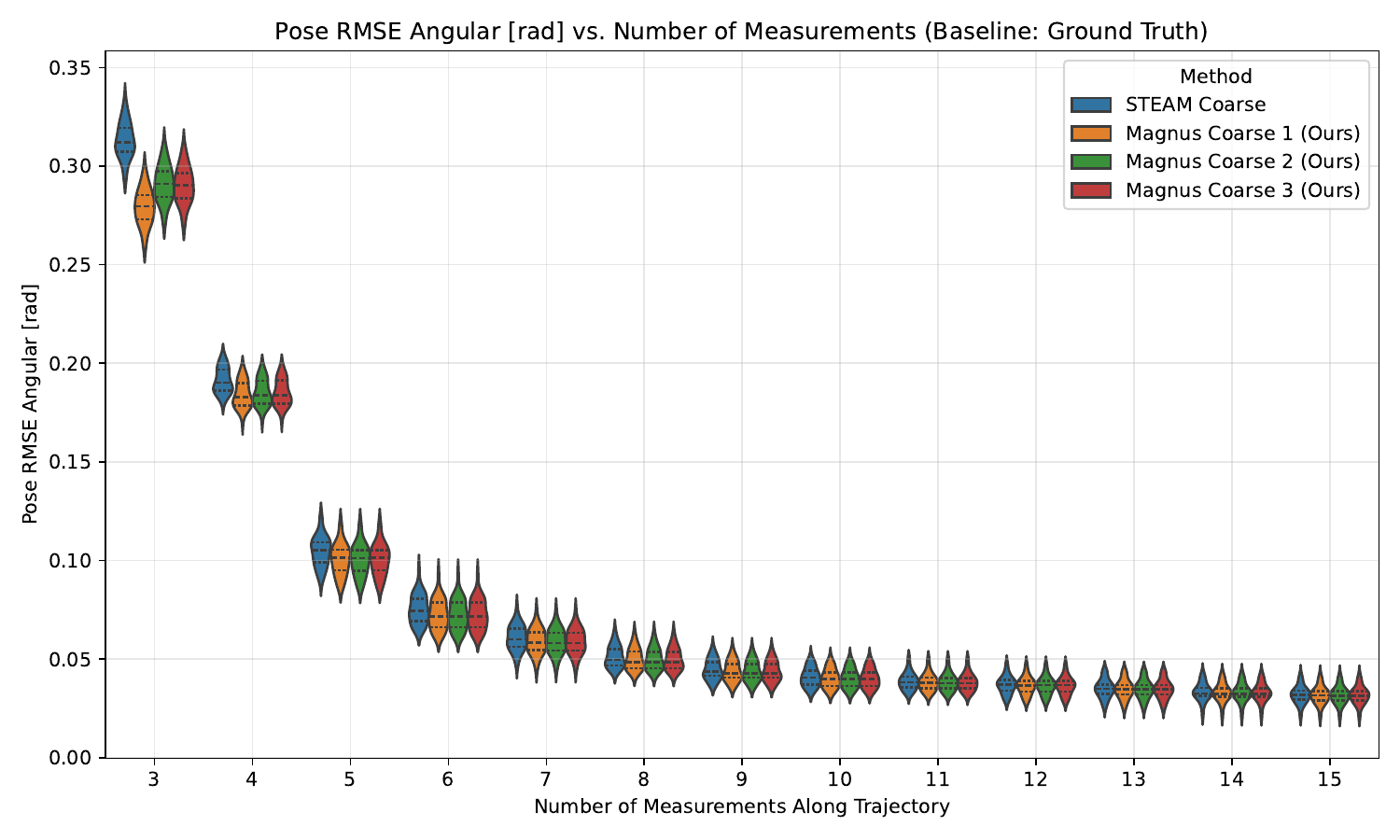}
    \caption{{\em Pose Accuracy:} Plots show the RMSE errors for the (top) linear and (bottom) angular components of the pose when compared to the ground-truth trajectory at the measurement times.  The different colours represent the different methods: baseline (STEAM) in blue, proposed method with 1st-order Magnus expansion in orange, 2nd-order in green, and 3rd-order in red.  The $x$-axis shows the number of measurement times used in the estimation.  Each violin plot summarizes the distribution of RMSE values over 50 trials with different noise realizations.}
    \label{fig:ground_truth_errors_pose}
\end{figure}

\begin{figure}[p]
    \centering
    \includegraphics[width=0.9\textwidth]{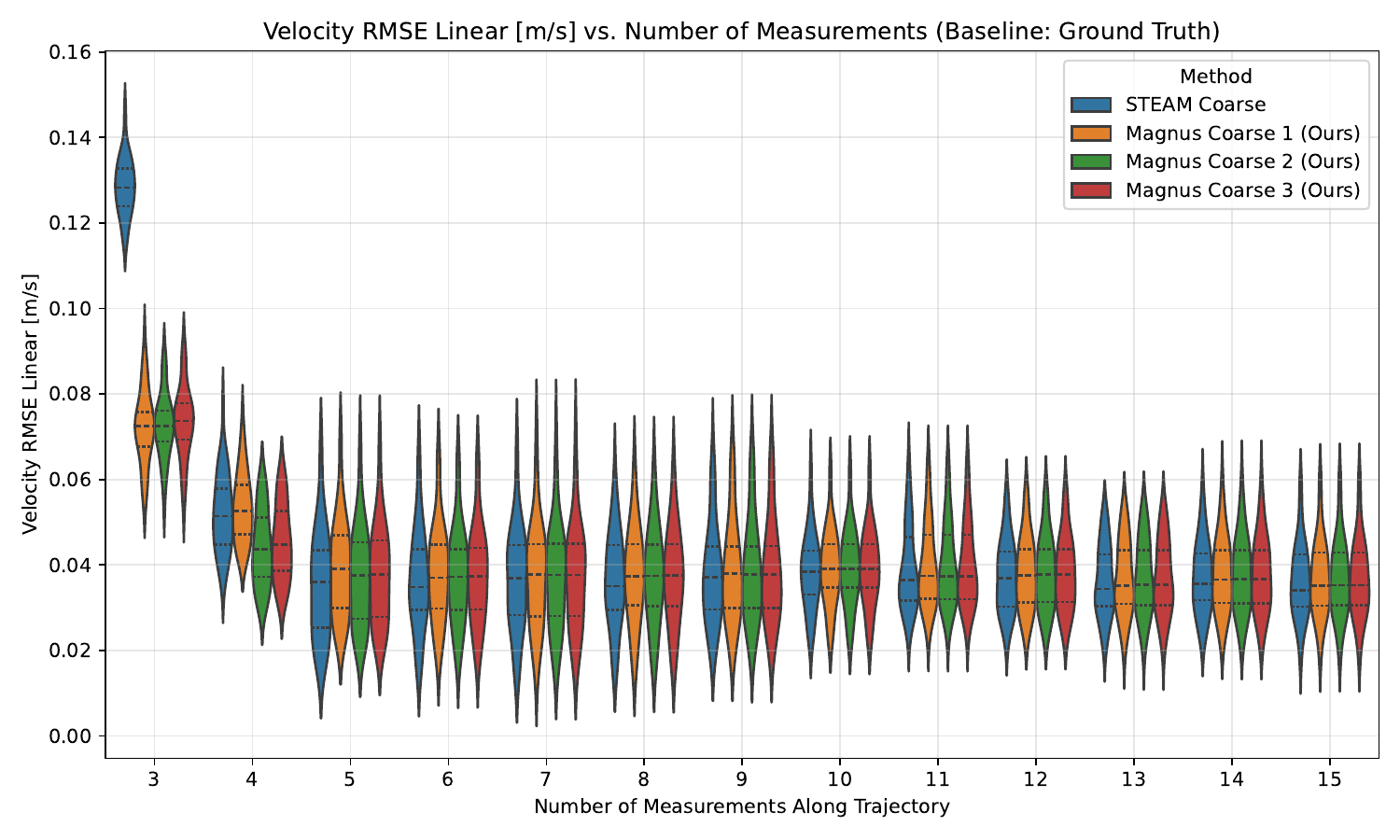}
    \includegraphics[width=0.9\textwidth]{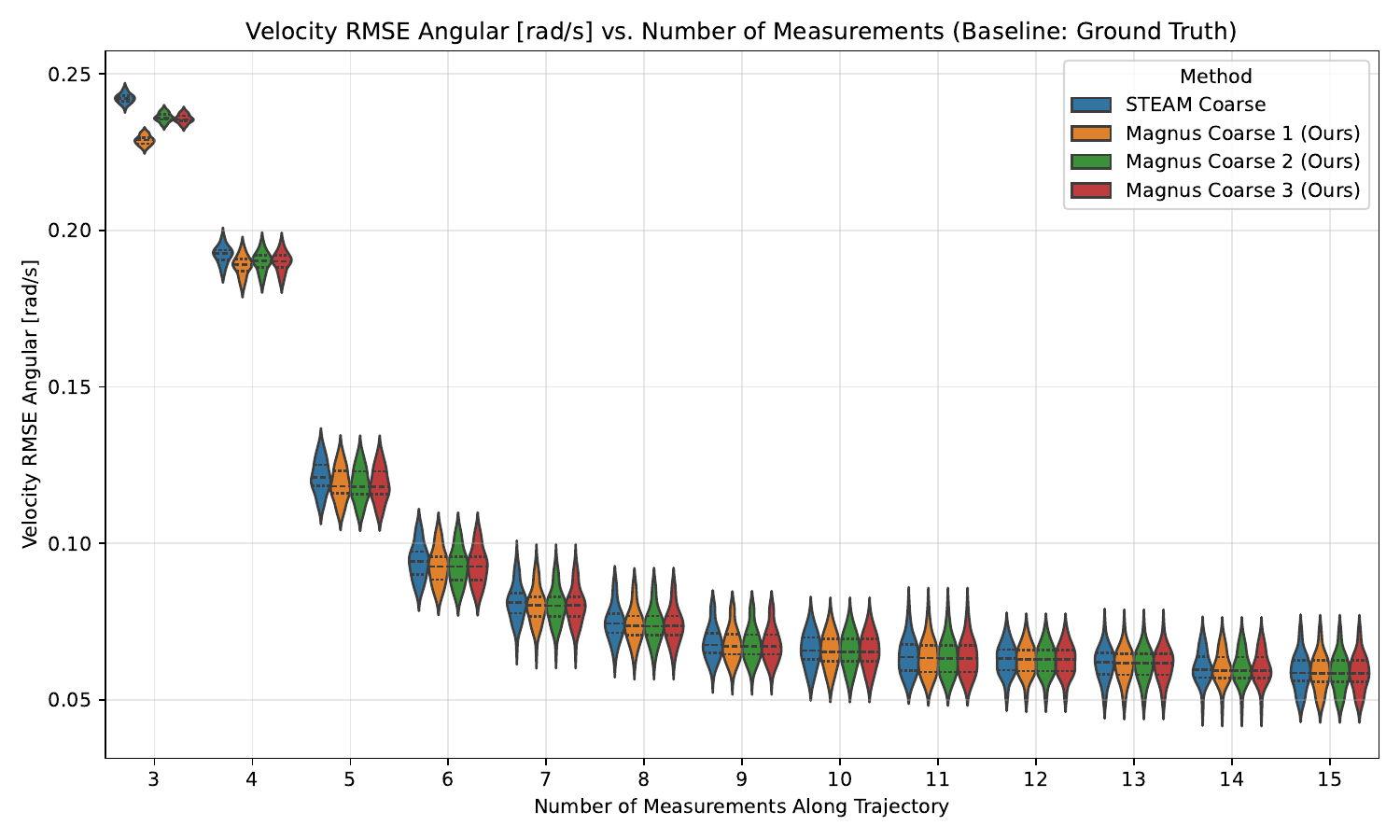}
    \caption{{\em Velocity Accuracy:} Plots show the RMSE errors for the (top) linear and (bottom) angular components of the velocity when compared to the ground-truth trajectory at the measurement times.  The different colours represent the different methods: baseline (STEAM) in blue, proposed method with 1st-order Magnus expansion in orange, 2nd-order in green, and 3rd-order in red.  The $x$-axis shows the number of measurement times used in the estimation.  Each violin plot summarizes the distribution of RMSE values over 50 trials with different noise realizations.}
    \label{fig:ground_truth_errors_velocity}
\end{figure}

\subsection{Interpolation Quality}

To evaluate the quality of the interpolation between measurement times, we again ran a large number of trials of the scenario depicted in Figure~\ref{fig:example_pose} with different flavours (i.e., 1, 2, 3 terms in the Magnus expansion) of our algorithm as well as the baseline.  We computed the RMSE between the interpolated trajectory (comprising $20 \times K_{\rm meas}$ timestamps) and the baseline method run with an equal number of estimation times (i.e., `STEAM Fine').  The idea here was to use the `Fine' method to represent a high-quality estimate of the trajectory against which we could compare the `Coarse' interpolated estimates.  We again varied the number of times at which we received noisy pose measurements from $K_{\rm meas} = 3\ldots15$.  At each value of $K_{\rm meas}$, we ran 50 trials with different random seeds to generate different noise realizations.  Figures~\ref{fig:interpolation_errors_pose} and~\ref{fig:interpolation_errors_velocity} show the RMSE results for the pose and velocity estimates, respectively.  When the spacing between measurements is large, the baseline method does better on the linear state variables, while the proposed methods do slightly better on the angular variables.  As the number of measurement times increases, all methods converge to similar performance.

\begin{figure}[p]
    \centering
    \includegraphics[width=0.9\textwidth]{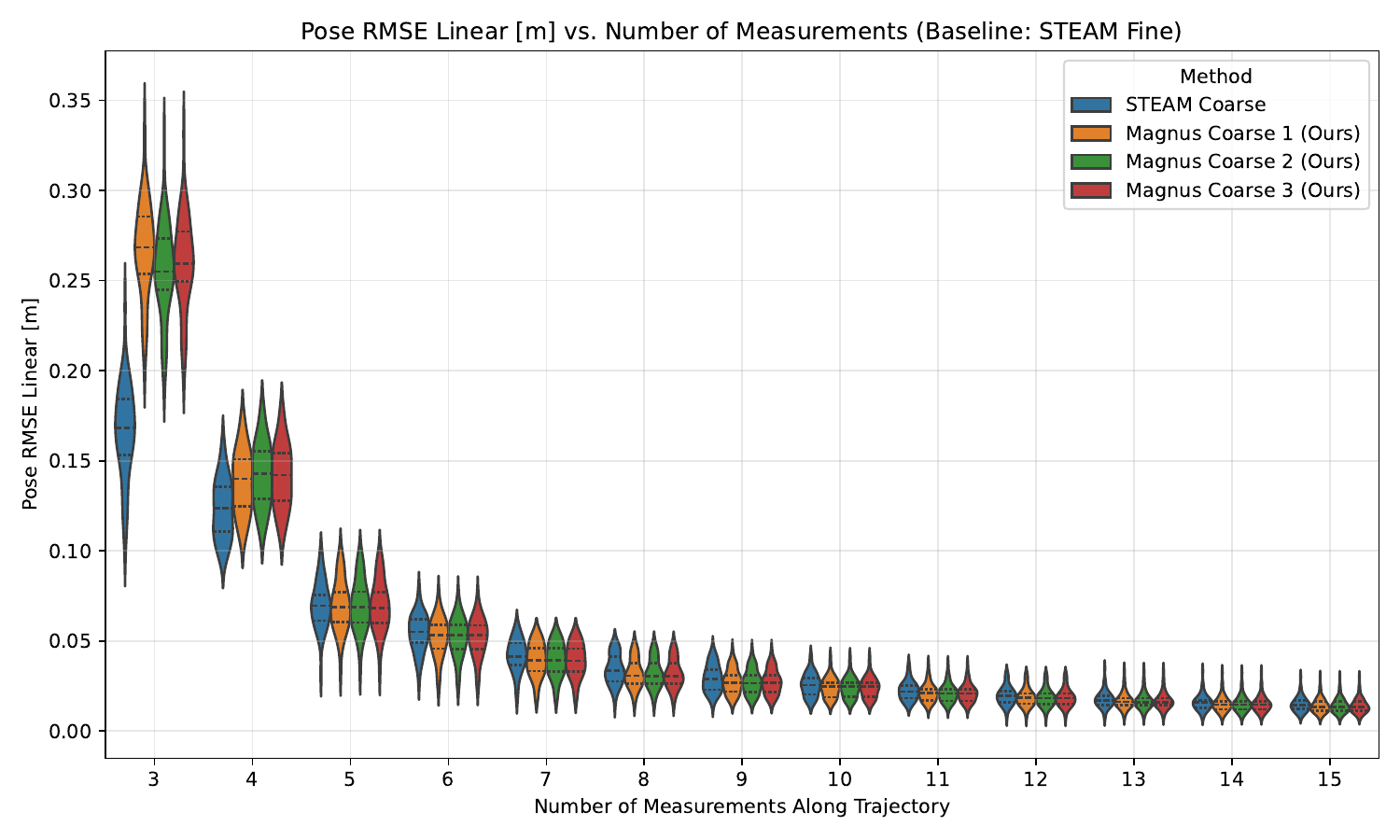}
    \includegraphics[width=0.9\textwidth]{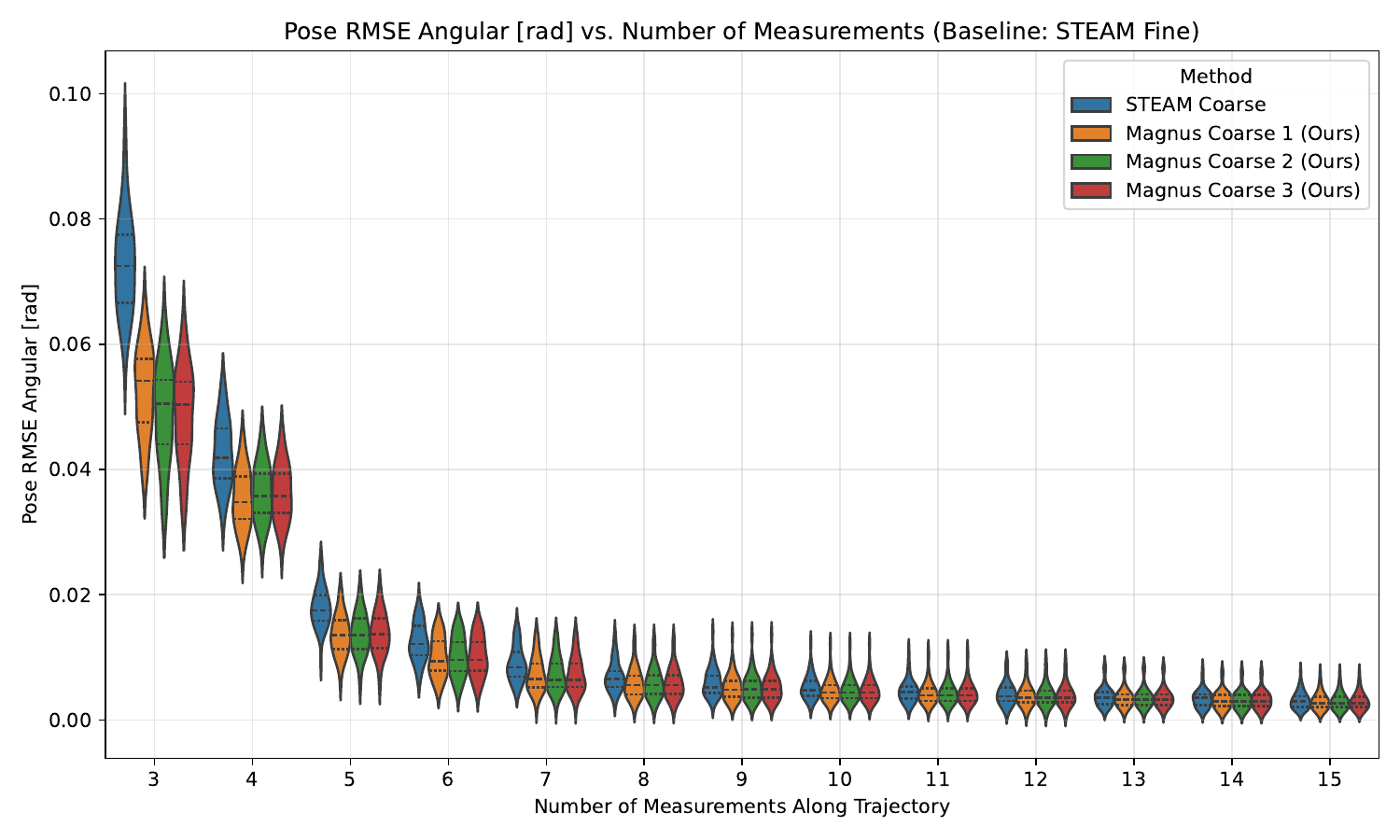}
    \caption{{\em Pose Interpolation Quality:} Plots show the RMSE errors for the (top) linear and (bottom) angular components of the pose when compared to the `STEAM Fine' trajectory at the interpolation times.  The different colours represent the different methods: baseline (STEAM) in blue, proposed method with 1st-order Magnus expansion in orange, 2nd-order in green, and 3rd-order in red.  The $x$-axis shows the number of measurement times used in the estimation.  Each violin plot summarizes the distribution of RMSE values over 50 trials with different noise realizations.}
    \label{fig:interpolation_errors_pose}
\end{figure}

\begin{figure}[p]
    \centering
    \includegraphics[width=0.9\textwidth]{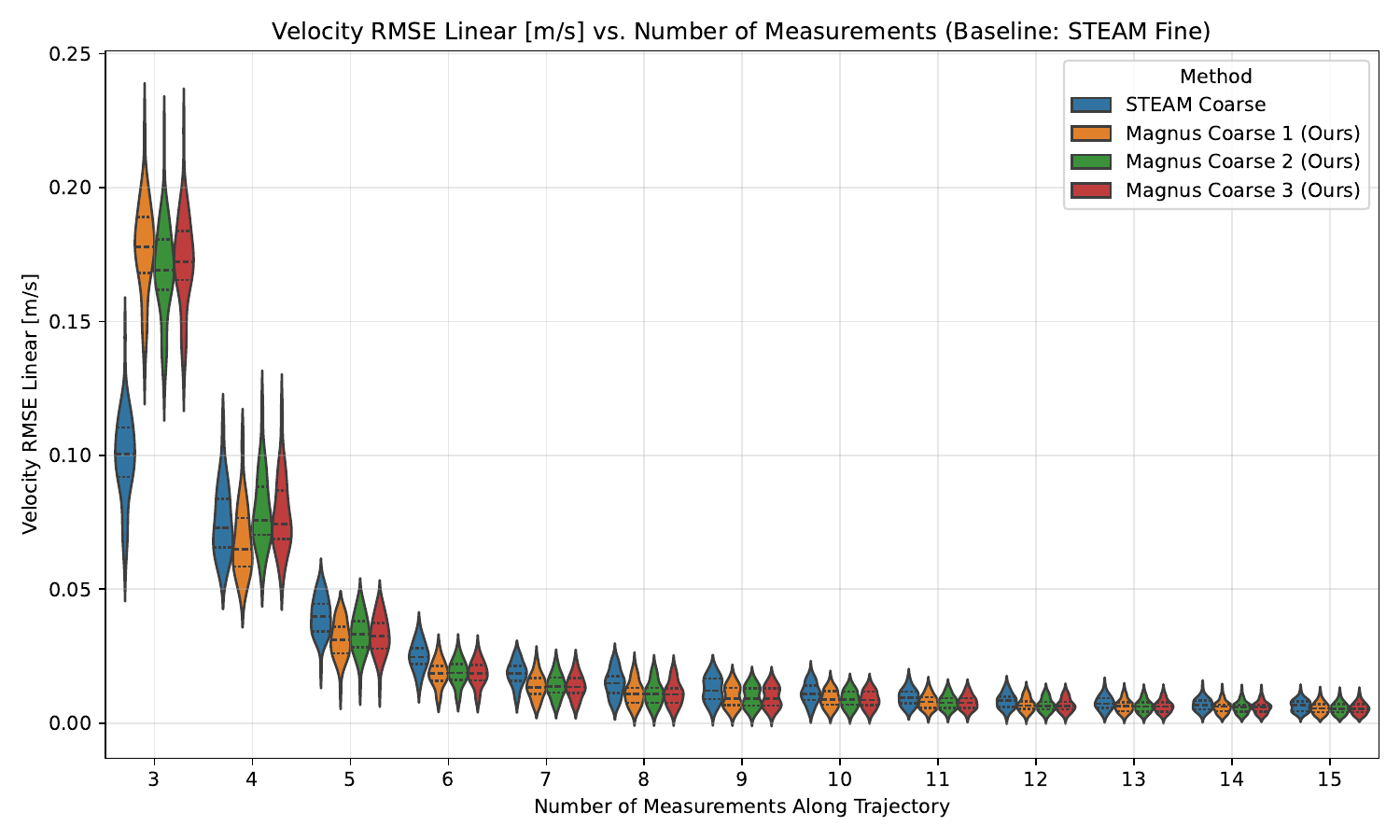}
    \includegraphics[width=0.9\textwidth]{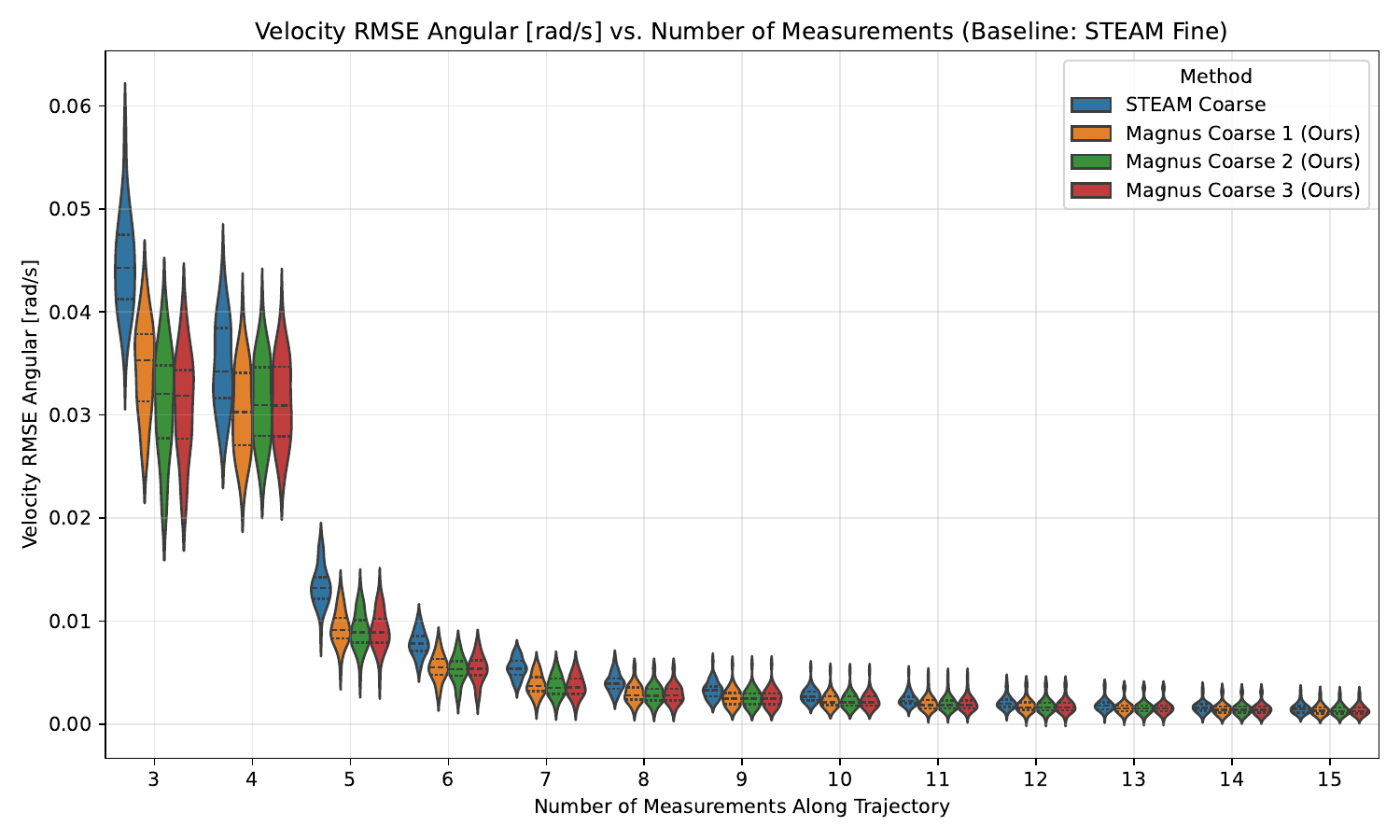}
    \caption{{\em Velocity Interpolation Quality:} Plots show the RMSE errors for the (top) linear and (bottom) angular components of the velocity when compared to the `STEAM Fine' trajectory at the interpolation times.  The different colours represent the different methods: baseline (STEAM) in blue, proposed method with 1st-order Magnus expansion in orange, 2nd-order in green, and 3rd-order in red.  The $x$-axis shows the number of measurement times used in the estimation.  Each violin plot summarizes the distribution of RMSE values over 50 trials with different noise realizations.}
    \label{fig:interpolation_errors_velocity}
\end{figure}

\subsection{Computational Cost}

We also measured the computational cost of each method as a function of the number of measurement times used in the estimation.  We measured both the main solve time as well as the interpolation time (to interpolate both the mean and covariance at $20 \times K_{\rm meas}$ timestamps after the main solve).  Figures~\ref{fig:computational_cost} shows the computation time results.  We can see that the baseline method is much faster for both the main solve and interpolation times.  Among the proposed methods, we see that the first-order Magnus expansion is fastest, followed by second-order and then third-order, as expected.

\begin{figure}[p]
    \centering
    \includegraphics[width=0.9\textwidth]{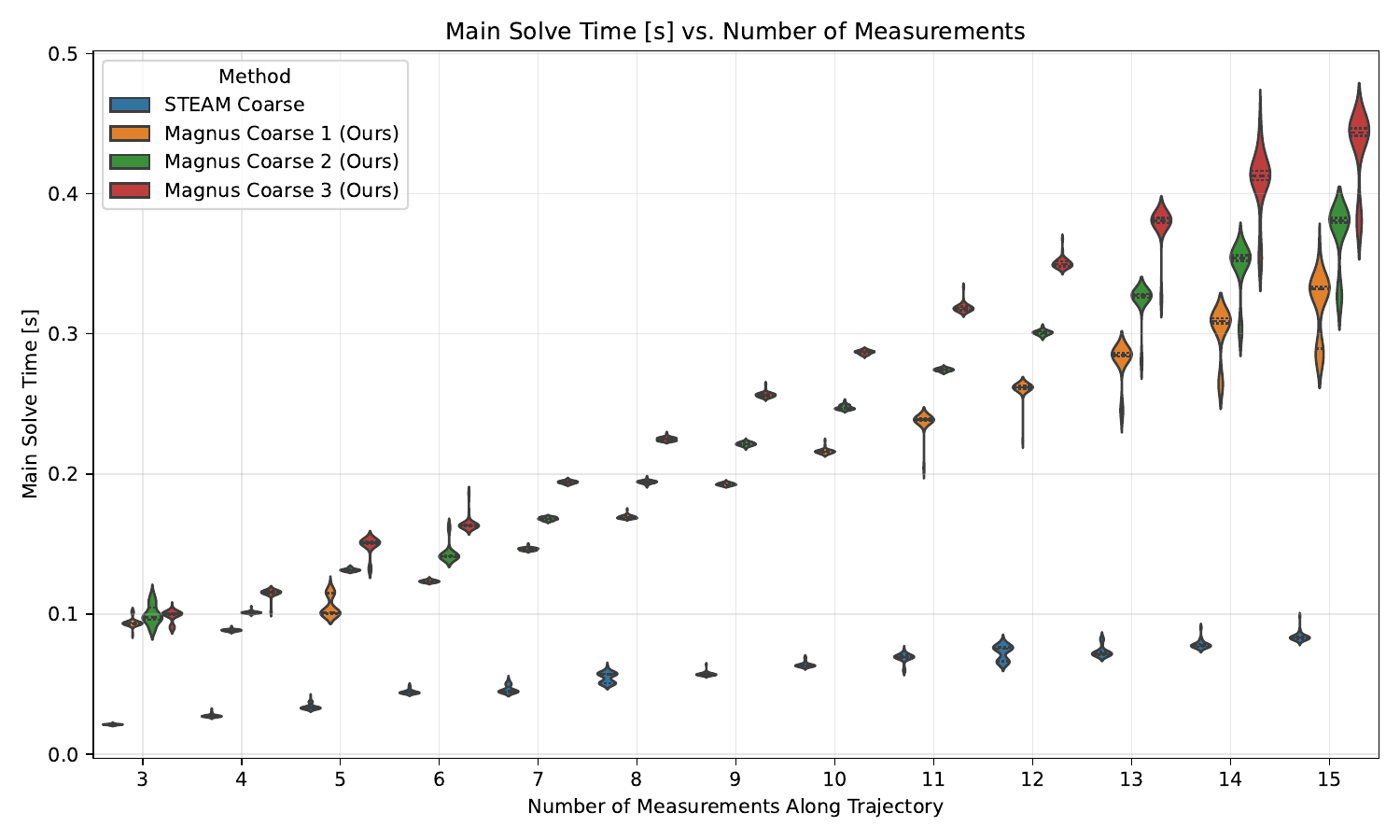}
    \includegraphics[width=0.9\textwidth]{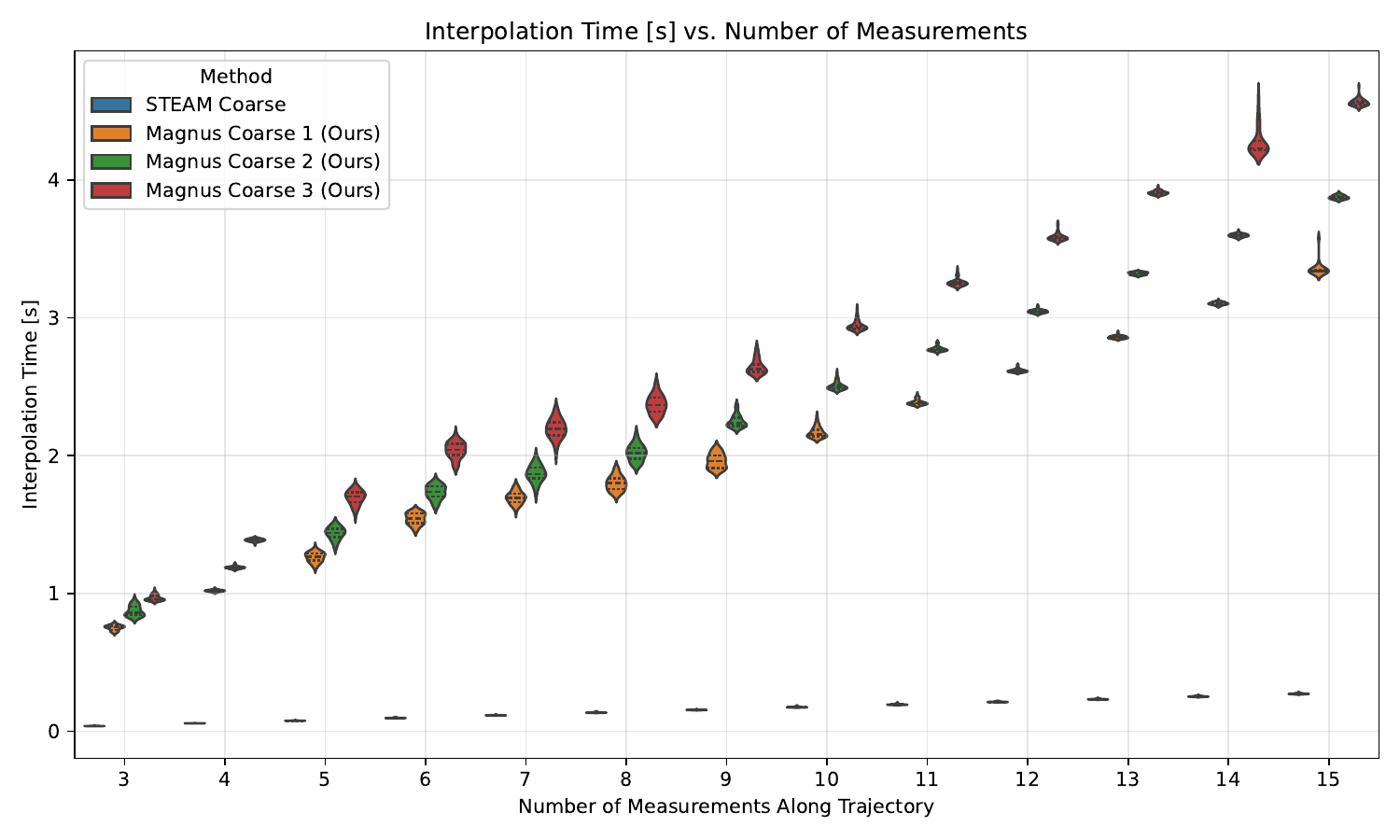}
    \caption{{\em Computational Cost:} We measured the computational cost of the main solve time (top) and interpolation time (bottom) as a function of the number of measurement times used in the estimation. The different colours represent the different methods: baseline (STEAM) in blue, proposed method with 1st-order Magnus expansion in orange, 2nd-order in green, and 3rd-order in red.  Each violin plot summarizes the distribution of computation times over 50 trials with different noise realizations.}
    \label{fig:computational_cost}
\end{figure}


\section{Conclusion}
\label{sec:conclusion}

We have developed a continuous-time \ac{GP} motion prior on $SE(3)$ that is capable of representing complex trajectories.  We used the Magnus expansion to compute the state transition matrix required to propagate the state and compute the discrete-time process noise covariance.  We integrated this motion prior into a framework for trajectory estimation and showed how to query the trajectory at arbitrary times after the main solve is complete.  

When comparing to the baseline method that we originally sought to improve upon, we found that the performance of our new method was quite similar in terms of accuracy and interpolation quality, but was computationally much slower owing to the more complex state transition matrix and process noise covariance calculations.  Future work could explore ways to speed up these calculations, perhaps by precomputing certain terms or finding approximations that are faster to compute.  Another avenue for future work could be to explore different motion models or priors that might better capture the dynamics of certain systems.  For now, our conclusion is to stick with baseline method but hope that some of the ideas presented here might be useful in other contexts.




\appendix

\section{Magnus Expansion}
\label{app:magnus}

We follow the treatment of \citet{blanes09} in this section.
When solving an \ac{LTV} \ac{SDE} of the form in \eqref{eq:ltv_sde}, we require the state transition matrix $\mbs{\Phi}(t,s)$ that propagates the state from time $s$ to time $t$.  This matrix satisfies the matrix differential equation
\begin{equation}\label{eq:stm_de}
    \dot{\mbs{\Phi}}(t,s) = \mbf{A}(t) \mbs{\Phi}(t,s), \quad \mbs{\Phi}(s,s) = \mbf{1},
\end{equation}
where $\dot{(\cdot)}$ denotes differentiation with respect to $t$.
The Magnus expansion proposes that the solution can be written as 
\begin{equation}\label{eq:magnus1}
    \mbs{\Phi}(t,s) = \exp\left( \mbs{\Omega}(t,s) \right),
\end{equation}
where $\mbs{\Omega}(t,s)$.  Using the properties of the matrix exponential and its derivative, we can write that 
\begin{equation}
    \dot{\mbs{\Phi}}(t,s) = \int_0^1 \exp\left( \alpha \mbs{\Omega}(t,s) \right) \dot{\mbs{\Omega}}(t,s) \exp\left( (1-\alpha) \mbs{\Omega}(t,s) \right) d\alpha.
\end{equation}
Post-multiplying by $\mbs{\Phi}(t,s)^{-1} = \exp\left( -\mbs{\Omega}(t,s) \right)$ and using~\eqref{eq:stm_de}, we have
\begin{equation}
    \mbf{A}(t) = \int_0^1 \underbrace{\exp\left( \alpha \mbs{\Omega}(t,s) \right) \dot{\mbs{\Omega}}(t,s) \exp\left( -\alpha \mbs{\Omega}(t,s) \right)}_{\mbox{Ad$_{\alpha\mbs{\Omega}}\dot{\mbs{\Omega}}$}} d\alpha,
\end{equation}
where the adjoint operators are defined as
\begin{gather}
    \mbox{Ad}_{\alpha\mbs{\Omega}} \dot{\mbs{\Omega}} = \exp\left( \alpha \mbs{\Omega} \right) \dot{\mbs{\Omega}} \exp\left( -\alpha \mbs{\Omega} \right) = \sum_{n=0}^\infty \frac{\alpha^n}{n!} \mbox{ad}_{\mbs{\Omega}}^n \dot{\mbs{\Omega}},  \\ \mbox{ad}_{\mbs{\Omega}}^n \dot{\mbs{\Omega}} = [\mbs{\Omega}, \mbox{ad}_{\mbs{\Omega}}^{n-1} \dot{\mbs{\Omega}}], \quad \mbox{ad}_{\mbs{\Omega}}^0 \dot{\mbs{\Omega}} = \dot{\mbs{\Omega}}.
\end{gather}
After performing the integral over $\alpha$, we then have that 
\begin{equation}
    \mbf{A}(t) = \sum_{n=0}^\infty \frac{1}{(n+1)!} \mbox{ad}_{\mbs{\Omega}(t,s)}^n \dot{\mbs{\Omega}}(t,s) = \mathcal{J}(\dot{\mbs{\Omega}}(t,s)),
\end{equation}
where the right-hand side represents an operator, $\mathcal{J}$, acting on $\dot{\mbs{\Omega}}(t,s)$.
Inverting this operator relationship leads to the following expression for $\dot{\mbs{\Omega}}(t,s)$:
\begin{equation}
    \dot{\mbs{\Omega}}(t,s) = \mathcal{J}^{-1}(\mbf{A}(t)) = \sum_{n=0}^\infty \frac{B_n}{n!} \mbox{ad}_{\mbs{\Omega}(t,s)}^n \mbf{A}(t),
\end{equation}
where $B_n$ are the Bernoulli numbers. Thus, we want to find the matrix $\mbs{\Omega}(t,s)$ that satisfies this differential equation, which is done typically using Picard iteration.

In our case, we are primarily interested in the situation where
\begin{equation}
    \mbf{A}(\tau) = \mbf{A} + \tau \mbf{B}
\end{equation}
where $\tau$ is time. We will later change variables to a slightly different form, but this form makes the integrals easiest.  We can then use a Picard iteration to solve for $\mbs{\Omega}(\tau,0)$ as follows:
\begin{equation}
    \mbs{\Omega}^{(n+1)}(\tau,0) = \int_{0}^\tau \sum_{n=0}^\infty \frac{B_n}{n!} \mbox{ad}_{\mbs{\Omega}^{(n)}(s,0)}^n \mbf{A}(s) ds,
\end{equation}
starting with $\mbs{\Omega}^{(0)}(\tau,0) = \mbf{0}$. Each time we solve for $\mbs{\Omega}^{(n+1)}(\tau,0)$, out to a desired order in $\tau$, then substitute this back into the right-hand side to get the next iteration; each time we substitute, the order of $\tau$ increases and we must be careful to include all the contributing terms from the previous iteration.  After several iterations we can identify and group different powers of $\tau$ to obtain the Magnus expansion term by term.  The first four terms are given by
\begin{subequations}
\begin{align}
    \mbs{\Omega}_1(\tau,0) &= \tau \mbf{A} + \frac{\tau^2}{2} \mbf{B}, \\
    \mbs{\Omega}_2(\tau,0) &= \frac{\tau^3}{12} [ \mbf{B}, \mbf{A}] \\
    \mbs{\Omega}_3(\tau,0) &= \frac{\tau^5}{240} [\mbf{B}, [\mbf{B}, \mbf{A}]], \\
    \mbs{\Omega}_4(\tau,0) &= -\frac{\tau^5}{720} [\mbf{A}, [\mbf{A}, [\mbf{B}, \mbf{A}]]] - \frac{\tau^6}{720} [\mbf{B}, [\mbf{A}, [\mbf{B}, \mbf{A}]]] - \frac{\tau^7}{5040} [\mbf{B}, [\mbf{B}, [\mbf{B}, \mbf{A}]]],
\end{align}
\end{subequations}
which we see all involve right-nested commutators with $[\mbf{B}, \mbf{A}]$, starting from the second term.  It is worth noting that by the Jacobi identity we know that 
\begin{equation}
    [\mbf{B}, [\mbf{A}, [\mbf{B}, \mbf{A}]]] = [\mbf{A}, [\mbf{B}, [\mbf{B}, \mbf{A}]]],
\end{equation}
which allowed us to merge two terms in $\mbs{\Omega}_4(\tau,0)$ above.

We will now make two different substitutions to produce the forms we need in the main text.  First, we will require $\mbs{\Phi}(t,t_{k-1}) = \exp\left( \mbs{\Omega}(t,t_{k-1})\right)$.  For this we let 
\begin{equation}
    \tau = t - t_{k-1}, \quad \mbf{A} = \mbf{A}_{k-1}, \quad \mbf{B} = \frac{1}{\Delta t_k}(\mbf{A}_k - \mbf{A}_{k-1}),
\end{equation}
where $\Delta t_k = t_k - t_{k-1}$.  This leads to 
\begin{equation}
    \mbf{A}(t) = \mbf{A}_{k-1} + \frac{t - t_{k-1}}{\Delta t_k} (\mbf{A}_k - \mbf{A}_{k-1}),
\end{equation}
and then the first four terms of the Magnus expansion become
\begin{subequations}
\begin{align}
    \mbs{\Omega}_1(t,t_{k-1}) &= (t - t_{k-1}) \mbf{A}_{k-1} + \frac{(t - t_{k-1})^2}{2 \, \Delta t_k} (\mbf{A}_k - \mbf{A}_{k-1}), \\
    \mbs{\Omega}_2(t,t_{k-1}) &= \frac{(t - t_{k-1})^3}{12 \, \Delta t_k} [\mbf{A}_k, \mbf{A}_{k-1}], \\
    \mbs{\Omega}_3(t,t_{k-1}) &= \frac{(t - t_{k-1})^5}{240 \, \Delta t_k^2} [\mbf{A}_k - \mbf{A}_{k-1}, [\mbf{A}_k, \mbf{A}_{k-1}]], \\
    \mbs{\Omega}_4(t,t_{k-1}) &= -\frac{(t - t_{k-1})^5}{720 \, \Delta t_k^2} [\mbf{A}_{k-1}, [\mbf{A}_{k-1}, [\mbf{A}_k, \mbf{A}_{k-1}]]] - \frac{(t - t_{k-1})^6}{720 \, \Delta t_k^3} [\mbf{A}_k - \mbf{A}_{k-1}, [\mbf{A}_{k-1}, [\mbf{A}_k, \mbf{A}_{k-1}]]] \nonumber \\
    & \qquad - \frac{(t - t_{k-1})^7}{5040 \, \Delta t_k^3} [\mbf{A}_k - \mbf{A}_{k-1}, [\mbf{A}_k - \mbf{A}_{k-1}, [\mbf{A}_k, \mbf{A}_{k-1}]]].
\end{align}
\end{subequations}
If we then take $t=t_k$, the first four terms are given in the main text as~\eqref{eq:magnus_terms} where $\mbs{\Omega}_{k,k-1} = \mbs{\Omega}(t_k,t_{k-1})$ as a shorthand.

Second, at times we may require $\mbs{\Phi}(t_k, t) = \exp\left( \mbs{\Omega}(t_k, t)\right)$.  This is a bit subtle since the time indices are now reversed, but we can still use the same form of the Magnus expansion.  We note that $\mbs{\Phi}(t_k, t) = \mbs{\Phi}(t, t_k)^{-1} = \exp\left( -\mbs{\Omega}(t, t_k)\right)$ so that $\mbs{\Omega}(t_k, t) = -\mbs{\Omega}(t, t_k)$.  

To build the terms of $\mbs{\Omega}(t, t_k)$ we let
\begin{equation}
    \tau = t - t_k, \quad \mbf{A} = \mbf{A}_k, \quad \mbf{B} = \frac{-1}{\Delta t_k}(\mbf{A}_{k-1} - \mbf{A}_k),
\end{equation}
which leads to
\begin{equation}
    \mbf{A}(t) = \mbf{A}_k + \frac{t_k - t}{\Delta t_k} (\mbf{A}_{k-1} - \mbf{A}_k).
\end{equation}
The first four terms of the Magnus expansion then become
\begin{subequations}
\begin{align}
    \mbs{\Omega}_1(t_k,t) &=  (t_k - t) \mbf{A}_k + \frac{(t_k - t)^2}{2 \, \Delta t_k} (\mbf{A}_{k-1} - \mbf{A}_k), \\
    \mbs{\Omega}_2(t_k,t) &= \frac{(t_k - t)^3}{12 \, \Delta t_k} [\mbf{A}_{k}, \mbf{A}_{k-1}], \\        
    \mbs{\Omega}_3(t_k,t) &= \frac{(t_k - t)^5}{240 \, \Delta t_k^2} [\mbf{A}_{k} - \mbf{A}_{k-1}, [\mbf{A}_{k}, \mbf{A}_{k-1}]], \\
    \mbs{\Omega}_4(t_k,t) &= -\frac{(t_k - t)^5}{720 \, \Delta t_k^2} [\mbf{A}_k, [\mbf{A}_k, [\mbf{A}_{k}, \mbf{A}_{k-1}]]] + \frac{(t_k - t)^6}{720 \, \Delta t_k^3} [\mbf{A}_{k} - \mbf{A}_{k-1}, [\mbf{A}_k, [\mbf{A}_{k}, \mbf{A}_{k-1}]]] \nonumber \\
    & \qquad - \frac{(t_k - t)^7}{5040 \, \Delta t_k^3} [\mbf{A}_{k} - \mbf{A}_{k-1}, [\mbf{A}_{k} - \mbf{A}_{k-1}, [\mbf{A}_{k}, \mbf{A}_{k-1}]]].
\end{align}
\end{subequations}
Again, if we take $t=t_{k-1}$, the first four terms are given in the main text as~\eqref{eq:magnus_terms} where $\mbs{\Omega}(t_k,t_{k-1})$ is again written as $\mbs{\Omega}_{k,k-1}$ for brevity.

\section{Proof of Discrete-Time State Transition Equivalence}
\label{app:commdiag_proof}

To show the equivalence of travelling both directions around the commutative diagram in Figure~\ref{fig:commdiag}, we need to show that the discrete-time state transition matrix obtained from the Magnus expansion applied to the \ac{LTV} \ac{SDE} in the perturbations is the same as that obtained from the Magnus expansion applied to the original \ac{LTV} \ac{SDE} in \eqref{eq:lie_sde}.  In detail, we need the following theorem:

\begin{theorem}
Let $\mbs{\psi} = \mbs{\psi}_1 + \mbs{\psi}_2 + \cdots + \mbs{\psi}_N$ be the `Magnus vector' obtained by applying the Magnus expansion to the \ac{LTV} \ac{SDE} in \eqref{eq:lie_sde} truncated after $N$ terms.  Let $\mbs{\Omega} = \mbs{\Omega}_1 + \mbs{\Omega}_2 + \cdots + \mbs{\Omega}_N$ be the discrete-time `Magnus matrix' obtained by applying the Magnus expansion to the linearized \ac{LTV} \ac{SDE} in~\eqref{eq:pert_sde} truncated after the same $N$ terms.  For simplicity, we let $t \in [0,T]$, $\mbs{\varpi}(t) = ( 1 - t/T)\mbs{\varpi}_1 + (t/T) \mbs{\varpi}_2$, and $\mbf{A}(t) = ( 1 - t/T)\mbf{A}_1 + (t/T) \mbf{A}_2$ such that
\begin{equation}
\mbf{A}(t) = \bbm \mbs{\varpi}(t)^\Wdg & \mbs{1} \\ \mbf{0} & \mbf{0} \ebm.
\end{equation}
Then we have that
\begin{equation}
    \mbs{\Omega} = \bbm \mbs{\psi}^\Wdg & \mbf{M} \\ \mbf{0} & \mbf{0} \ebm,
\end{equation}
where $\mbf{M} = \frac{\partial \mbs{\psi}}{\partial \mbs{\varpi}_2} + \frac{\partial \mbs{\psi}}{\partial \mbs{\varpi}_{1}}$ is the aggregate Jacobian of the Magnus vector with respect to the body-centric velocities.
\end{theorem}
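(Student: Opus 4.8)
The plan is to exploit the block structure of $\mbf{A}(t)$ at every stage. First I would record the elementary closure identity that matrices of the form $\bbm \mbf{p} & \mbf{q} \\ \mbf{0} & \mbf{0} \ebm$ form a Lie subalgebra under the matrix commutator, with
\[ \left[ \bbm \mbf{p}_1 & \mbf{q}_1 \\ \mbf{0} & \mbf{0} \ebm, \bbm \mbf{p}_2 & \mbf{q}_2 \\ \mbf{0} & \mbf{0} \ebm \right] = \bbm [\mbf{p}_1,\mbf{p}_2] & \mbf{p}_1\mbf{q}_2 - \mbf{p}_2\mbf{q}_1 \\ \mbf{0} & \mbf{0} \ebm. \]
Since every term of the Magnus expansion is an (integrated) Lie polynomial in $\mbf{A}(t_{k-1})$ and $\mbf{A}(t_k)$, the full Magnus matrix $\mbs{\Omega}$ inherits this block form, so it remains only to identify its two nonzero corners. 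The projection onto the upper-left corner is a Lie-algebra homomorphism (the bracket formula shows the upper-left corner of a commutator is the commutator of upper-left corners) and commutes with integration, so the upper-left block of $\mbs{\Omega}$ is exactly the Magnus matrix built from $\mbs{\varpi}(t)^\Wdg$. Because $(\cdot)^\Wdg = \mbox{ad}((\cdot)^\wdg)$ is itself a homomorphism, this equals $\mbox{ad}$ of the group-level Magnus matrix, i.e.\ $\mbs{\psi}^\Wdg$. This disposes of the upper-left block term by term and for every truncation order $N$.

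The harder half is the upper-right block, which I would identify with $\mbf{M}$. The key starting observation is that $\mbf{M}\mbs{v}$ is the directional derivative of $\mbs{\psi}$ under the common shift $\mbs{\varpi}_1\mapsto\mbs{\varpi}_1+s\mbs{v}$, $\mbs{\varpi}_2\mapsto\mbs{\varpi}_2+s\mbs{v}$: since the interpolation weights sum to one, this sends $\mbs{\varpi}(t)\mapsto\mbs{\varpi}(t)+s\mbs{v}$, so that $(\mbf{M}\mbs{v})^\Wdg = \left.\partial_s\right|_0$ of the adjoint Magnus matrix built from $\mbs{\varpi}(t)^\Wdg + s\,\mbs{v}^\Wdg$. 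On the other side, the bracket formula shows that $\mbf{A}(t_k)-\mbf{A}(t_{k-1}) = \bbm \mbs{\varpi}_2^\Wdg-\mbs{\varpi}_1^\Wdg & \mbf{0} \\ \mbf{0} & \mbf{0} \ebm$ has a \emph{vanishing} upper-right corner; since (as noted in Appendix~\ref{app:magnus}) every Magnus term past the first is a right-nested commutator built on the core $[\mbf{A}(t_k),\mbf{A}(t_{k-1})]$ (whose upper-right block is $\mbs{\varpi}_2^\Wdg-\mbs{\varpi}_1^\Wdg$) with outer factors that are either such differences or the endpoint matrices themselves, the difference factors feed the upper-right block only by left multiplication with $(\mbs{\varpi}_2-\mbs{\varpi}_1)^\Wdg$. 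I would use this to read off the upper-right block of each term explicitly and match it against the shift-derivative of the corresponding $\mbs{\psi}_i$.

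The main obstacle is proving these two computations coincide for every term. Extracting the upper-right block keeps only the single rightmost occurrence of the injected identity in each expanded word, whereas the directional derivative sums over all occurrences; these operations are genuinely different on arbitrary words and agree only because the Magnus terms are fully antisymmetrised Lie elements and the injected block is $\mbf{1}$. I expect to discharge this either (i) term by term, using the right-nested structure and the vanishing-difference observation to collapse each term to repeated left multiplication by $(\mbs{\varpi}_2-\mbs{\varpi}_1)^\Wdg$ on the core and matching the same structure produced by differentiating $\mbs{\psi}$, with the Jacobi identity doing the essential bookkeeping (exactly as it merged the two terms of $\mbs{\Omega}_4$ in Appendix~\ref{app:magnus}, which is precisely the non-difference term where the subtlety bites); or (ii) more systematically, by writing the defining equation $\dot{\mbs{\Omega}} = \sum_n \frac{B_n}{n!}\mbox{ad}_{\mbs{\Omega}}^n\mbf{A}(t)$ in block form, which yields a closed linear ODE for the upper-right block driven by the upper-left block, and checking it agrees with the sensitivity ODE obtained by differentiating the adjoint Magnus equation in the shift direction. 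As an independent cross-check I would confirm the claim at the exact (untruncated) level, where variation of parameters on $\dot{\mbs{\Phi}}_{12} = \mbs{\varpi}(t)^\Wdg\mbs{\Phi}_{12}+\mbf{1}$ gives the upper-right block of $\exp(\mbs{\Omega})$ as $\mbs{\Phi}_{11}(t_k)\int_{t_{k-1}}^{t_k}\mbs{\Phi}_{11}(\tau)^{-1}\,d\tau = \Jbig(\mbs{\psi})\,\mbf{M}$, consistent with exponentiating the claimed $\mbs{\Omega}$.
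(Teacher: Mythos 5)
Your strategy is sound, and for the genuinely hard step it is the same as the paper's: the paper also verifies orders $n=1,2$ explicitly and then inducts on the right-nested commutator structure of the Magnus terms (invoking, as you do, that every higher term is a right-nested bracket built on $[\mbf{A}_2,\mbf{A}_1]$). Where you genuinely differ is the upper-left block, which you dispatch in one stroke by noting that corner projection and $(\cdot)^\Wdg$ are Lie-algebra homomorphisms applied to universal Lie polynomials; this is cleaner and more conceptual than the paper's term-by-term verification and cleanly isolates the upper-right block as the only real content. For that block, the ``main obstacle'' you flag is real but dissolves in two lines, which is essentially your route (i) and exactly the paper's inductive step: for a right-nested term $\mbs{\psi}_{n+1}=(w_1\mbs{\varpi}_1+w_2\mbs{\varpi}_2)^\Wdg\mbs{\psi}_n$, the product rule together with antisymmetry of the adjoint ($\mbs{v}^\Wdg\mbs{\psi}=-\mbs{\psi}^\Wdg\mbs{v}$) gives
\[
\mbf{M}_{n+1}=(w_1\mbs{\varpi}_1+w_2\mbs{\varpi}_2)^\Wdg\,\mbf{M}_n-(w_1+w_2)\,\mbs{\psi}_n^\Wdg,
\]
which is precisely the upper-right corner your bracket formula yields for $[\,w_1\mbf{A}_1+w_2\mbf{A}_2,\;\mbs{\Omega}_n\,]$, the outer factor having upper-right block $(w_1+w_2)\mbf{1}$. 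The ``rightmost occurrence versus all occurrences'' tension never arises because the nesting is right-nested, so the injected identity only ever enters through the single outermost factor, and antisymmetry converts the one extra derivative term into the $-\mbs{\psi}_n^\Wdg$ corner contribution; no appeal to full antisymmetrisation or to the Jacobi identity is needed beyond this. Your ODE route (ii) and the variation-of-parameters cross-check (whose conclusion, that the upper-right block of $\exp(\mbs{\Omega})$ is $\Jbig(\mbs{\psi})\mbf{M}$, matches the paper's final step) are valid but unnecessary. If you write this up, make explicit that both corner extraction and the aggregate derivative are linear, so it suffices to treat one right-nested word per order, and check the non-commutator base case $n=1$ separately, as the paper does.
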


\begin{proof}
We can verify the result term by term in the Magnus expansion.
For $n=1$ we have that $\mbs{\psi}_1 = \frac{T}{2} \left( \mbs{\varpi}_1 + \mbs{\varpi}_2 \right)$ and $\mbs{\Omega}_1 = \frac{T}{2} \bbm \left( \mbs{\varpi}_1 + \mbs{\varpi}_2\right)^\Wdg & 2 \cdot \mbs{1} \\ \mbf{0} & \mbf{0} \ebm$, so the result holds by inspection.  For $n=2$, we have that $\mbs{\psi}_2 = \frac{T^2}{12} \mbs{\varpi}_2^\Wdg \mbs{\varpi}_1$ and 
\begin{equation}
\mbs{\Omega}_2 = \frac{T^2}{12} \bbm [\mbs{\varpi}_2^\Wdg, \mbs{\varpi}_1^\Wdg] & \left(\mbs{\varpi}_2 - \mbs{\varpi}_1\right)^\Wdg \\ \mbf{0} & \mbf{0} \ebm .
\end{equation}  
Since
\begin{equation}
\mbs{\psi}_2 = \frac{T^2}{2}\mbs{\varpi}_2^\Wdg \mbs{\varpi}_1 , \quad \mbf{M}_2 = \frac{\partial \mbs{\psi}_2}{\partial \mbs{\varpi}_2} + \frac{\partial \mbs{\psi}_2}{\partial \mbs{\varpi}_{1}} = \frac{T^2}{12} \left( \mbs{\varpi}_2 - \mbs{\varpi}_1\right)^\Wdg,
\end{equation}
we again have the equivalence.  From here, we note that all the terms in the Magnus expansion involve right-nested commutators with the innermost term being $[\mbf{A}_2, \mbf{A}_1]$ \citep{arnal25}.  We can then proceed by induction.  We have shown the base case for $n=2$.  We then assume there is some commutator term at order $n$ for which the equivalence holds so that
\begin{equation}
    \mbs{\Omega}_n = \bbm \mbs{\psi}_n^\Wdg & \mbf{M}_n \\ \mbf{0} & \mbf{0} \ebm,
\end{equation}
where $\mbf{M}_n = \frac{\partial \mbs{\psi}_n}{\partial \mbs{\varpi}_2} + \frac{\partial \mbs{\psi}_n}{\partial \mbs{\varpi}_{1}}$.  Each term at order $n+1$ is formed by taking the commutator of the order $n$ term with a linear combination of $\mbs{\varpi}_1$ and $\mbs{\varpi}_2$, or $\mbf{A}_1$ and $\mbf{A}_2$, so that
\beqn{}
\mbs{\psi}_{n+1} & = & \left( w_1 \mbs{\varpi}_1 + w_2 \mbs{\varpi}_2 \right)^\Wdg \mbs{\psi}_n, \\
\mbs{\Omega}_{n+1} & = & \left[ w_1 \mbf{A}_1 + w_2 \mbf{A}_2, \; \mbs{\Omega}_n \right],
\eeqn
where $w_1$ and $w_2$ are scalar weights.  We have from the first equation by the product rule of differentiation that
\begin{equation}
    \mbf{M}_{n+1} = \frac{\partial \mbs{\psi}_{n+1}}{\partial \mbs{\varpi}_2} + \frac{\partial \mbs{\psi}_{n+1}}{\partial \mbs{\varpi}_{1}} = \left( w_1 \mbs{\varpi}_1 + w_2 \mbs{\varpi}_2 \right)^\Wdg \mbf{M}_n - \left( w_1 + w_2 \right) \mbs{\psi}_n^\Wdg.
\end{equation}
From the second equation, we have that 
\begin{multline}
    \mbs{\Omega}_{n+1} = \left[ \bbm \left( w_1 \mbs{\varpi}_1 + w_2 \mbs{\varpi}_2 \right)^\Wdg & (w_1 + w_2) \mbf{1} \\ \mbf{0} & \mbf{0} \ebm, \bbm \mbs{\psi}_n^\Wdg & \mbf{M}_n \\ \mbf{0} & \mbf{0} \ebm \right] \\ = \bbm \left[ \left( w_1 \mbs{\varpi}_1 + w_2 \mbs{\varpi}_2 \right)^\Wdg, \; \mbs{\psi}_n^\Wdg \right] & \left( w_1 \mbs{\varpi}_1 + w_2 \mbs{\varpi}_2 \right)^\Wdg \mbf{M}_n - \left( w_1 + w_2 \right) \mbs{\psi}_n^\Wdg \\ \mbf{0} & \mbf{0} \ebm = \bbm \mbs{\psi}_{n+1}^\Wdg & \mbf{M}_{n+1} \\ \mbf{0} & \mbf{0} \ebm,   
\end{multline}
which matches the desired form.  Thus, by induction, the result holds for all $N$.
\end{proof}

We have shown that the `Magnus matrix' obtained from the \ac{LTV} \ac{SDE} in the perturbations has the desired form.  Taking the matrix exponential then leads to the discrete-time state transition matrix:
\begin{equation}
    \mbs{\Phi}(T,0) = \exp\left( \mbs{\Omega} \right) = \bbm \exp\left( \mbs{\psi}^\Wdg \right) & \Jbig\left( \mbs{\psi} \right) \mbf{M} \\ \mbf{0} & \mbf{1} \ebm,
\end{equation}
where we have used the property of the matrix exponential of block upper-triangular matrices \citep{barfoot_ser24}.  This appears in the main body as \eqref{eq:stm}.

\end{document}